\definecolor{yale}{RGB}{14,77,146}
\title{On the complexity of All $\varepsilon$-Best Arms Identification}
\author{%
  Aymen Al Marjani
  \\ UMPA, ENS Lyon\\ 
  Lyon, France \\ \texttt{aymen.al\_marjani@ens-lyon.fr} \\
     \And
  Tomas Kocak \\
   University of Potsdam \\
   
   \texttt{tomas.kocak@gmail.com} \\
   \AND
   Aurélien Garivier \\
   UMPA, CNRS, INRIA, ENS Lyon \\
   Lyon, France \\
   \texttt{aurelien.garivier@ens-lyon.fr} \\

}
\newtheorem{theorem}{Theorem}
\newtheorem{lemma}[theorem]{Lemma}
\newtheorem{proposition}[theorem]{Proposition}
\theoremstyle{definition}
\newtheorem{definition}{Definition}
\newcommand{\Alt}[1]{\operatorname{Alt}(#1)}
\newcommand{\KL}{\operatorname{KL}}
\DeclareMathOperator*{\argmin}{arg\,min}
\DeclarePairedDelimiter\floor{\lfloor}{\rfloor}
\newcommand{\dsR}{\mathds{R}}
\newcommand{\E}{\mathbb{E}}
\renewcommand{\P}{\mathbb{P}}
\newcommand{\cA}{\mathcal{A}}
\newcommand{\cD}{\mathcal{D}}
\newcommand{\cN}{\mathcal{N}}
\newcommand{\cO}{\mathcal{O}}
\newcommand{\CT}{T_{\varepsilon}^*(\bmu)}
\newcommand{\bd}{\bm{d}}
\newcommand{\bmu}{{\boldsymbol \mu}}
\newcommand{\bnu}{{\boldsymbol \nu}}
\newcommand{\bomega}{{\boldsymbol \omega}}
\newcommand{\blambda}{\boldsymbol \lambda}
\newcommand{\blambdakl}{\blambda^{k,\ell}_{\varepsilon}(\bomega)}
\newcommand{\blambdaklp}{\blambda'^{k,\ell}_{\varepsilon}(\bomega)}
\newcommand{\blambdastar}{\blambda^*_{\varepsilon, \bmu}(\bomega)}
\newcommand{\suma}{\sum_{a\in[K]}}
\newcommand{\suml}{\sum_{a=1}^\ell}
\newcommand{\simplex}[1]{\Delta_#1}
\newcommand{\G}[1]{G_{\varepsilon}(#1)}
\newcommand{\B}[1]{B_{\varepsilon}(#1)}
\newcommand{\transpose}{^\mathsf{\scriptscriptstyle T}}
\newcommand{\norm}[1]{\left\lVert#1\right\rVert}
\newcommand{\TV}[1]{\mathrm{TV}(#1)}
\newcommand{\kl}{\operatorname{kl}}
\newcommand{\Dset}{\cD_{\varepsilon,\,\bmu}}
\newcommand{\avg}{\overline{\bmu}_{\varepsilon}^{k,\ell}(\bomega)}
\newcommand{\avgp}{\overline{\bmu'}_{\varepsilon}^{k,\ell}(\bomega)}
\renewcommand{\epsilon}{\varepsilon}
\newcommand{\ie}{i.e.\xspace}
\begin{document}

\maketitle
%


\begin{abstract}
  We consider the question introduced by \cite{Mason2020} of identifying all the $\varepsilon$-optimal arms in a finite stochastic multi-armed bandit with Gaussian rewards. We give two lower bounds on the sample complexity of any algorithm solving the problem with a confidence at least $1-\delta$. The first, unimprovable in the asymptotic regime, motivates the design of a Track-and-Stop strategy whose average sample complexity is asymptotically optimal when the risk $\delta$ goes to zero. Notably, we provide an efficient numerical method to solve the convex max-min program that appears in the lower bound. Our method is based on a complete characterization of the alternative bandit instances that the optimal sampling strategy needs to rule out, thus making our bound tighter than the one provided by \cite{Mason2020}. The second lower bound deals with the regime of high and moderate values of the risk $\delta$, and characterizes the behavior of any algorithm in the initial phase. It emphasizes the linear dependency of the sample complexity in the number of arms. Finally, we report on numerical simulations demonstrating our algorithm's advantage over state-of-the-art methods, even for moderate risks.
  
\end{abstract}

\section{Introduction}

The problem of finding all the $\varepsilon$-good arms was recently introduced by \cite{Mason2020}. For a finite family of distributions $(\bnu_a)_{a \in [K]}$ with vector of mean rewards $\bmu = (\mu_a)_{a \in [K]}$, the goal is to return 
$\G{\bmu} \triangleq \{ a\in[K]:\mu_a \geq \max_{i} \mu_i - \varepsilon \}$ in the additive case and $\G{\bmu} \triangleq \{ a\in[K]:\mu_a \geq (1 - \varepsilon) \max_{i} \mu_i \}$ in the multiplicative case. This problem is closely related to two other pure-exploration problems in the multi-armed bandit literature, namely the TOP$-k$ arms selection and the THRESHOLD bandits. The former aims to find the $k$ arms with the highest means, while the latter seeks to identify all arms with means larger than a given threshold $s$. As argued by \cite{Mason2020}, finding all the $\varepsilon$-good arms is a more robust objective than the TOP-K and THRESHOLD problems, which require some prior knowledge of the distributions in order to return a relevant set of solutions. Take for example drug discovery applications, where the goal is to perform an initial selection of potential drugs through \textit{in vitro} essays before conducting more expensive clinical trials: setting the number of arms $k$ too high or the threshold $s$ too low may result into poorly performing solutions. Conversely, if we set $k$ to a small number or the threshold $s$ too high we might miss promising drugs that will prove to be more efficient under careful examination. The All-$\varepsilon$ objective circumvents this issues by requiring to return all drugs whose efficiency lies within a certain range from the best.
In this paper, we want to identify $\G{\bmu}$ in a PAC learning framework with fixed confidence: for a risk level $\delta$, the algorithm samples arms $a \in [K]$ in a sequential manner to gather information about the distribution means $(\mu_a)_{a\in [K]}$ and returns an estimate $\widehat{G}_\varepsilon$ such that $\P_{\bmu}(\widehat{G}_\varepsilon \neq \G{\bmu}) \leq \delta$. Such an algorithm is called $\delta$-PAC and its performance is measured by the expected number of samples $\E[\tau_\delta]$, also called the \textit{sample complexity}, needed to return a good answer with high probability. \cite{Mason2020} provided two lower bounds on the sample complexity: fhe first bound is based on a classical change-of-measure argument and exhibits the behavior of sample complexity in the low confidence regime ($\delta\to0$). The second bound resorts to the Simulator technique~\cite{simchowitz17a} combined with an algorithmic reduction to Best Arm Identification and shows the dependency of the sample complexity on the number of arms $K$ for moderate values of $\delta$. They also proposed \textrm{FAREAST}, an algorithm matching the first lower bound, up to some numerical constants and log factors, in the asymptotic regime $\delta\to0$. Our contributions can be summarized as follows: 
\begin{itemize}
    \item Usual lower bounds on the sample complexity write as $f(\nu)\log(1/\delta) + g(\nu)$ for an instance $\nu$. We derive a tight bound in terms of the first-order term which writes as $\CT\log(1/\delta)$, where the characteristic time $\CT$ is the value of a concave max-min optimization program. Our bound is tight in the sense that any lower bound of the form $f(\nu)\log(1/\delta)$ that holds for all $\delta \in (0,1)$ is such that $f(\nu) \leq \CT$. To do so, we investigate all the possible alternative instances $\blambda$ that one can obtain from the original problem $\bmu$ by a change-of-measure, including (but not only) the ones that were considered by \cite{Mason2020}. 
    \vspace{0.05in}
    \item  We derive a second lower bound that writes as $g(\nu)$ in Theorem \ref{thm:simulator_LB}. $g(\nu)$ shows an additional linear dependency on the number of arms which is negligible when $\delta \to 0$ but can be dominant for moderate values of the risk. This result generalizes Theorem 4.1 in \cite{Mason2020}, since it also includes cases where there can be several arms with means close to the top arm. The proof of this result relies on a personal rewriting  of the Simulator method of \cite{simchowitz17a} which was proposed for the Best Arm Identification and TOP-k problems. As we explain in Section \ref{sec:simulator_lb}, our proof can be adapted to derive lower bounds for other pure exploration problems, {\it without resorting to algorithmic reduction of these problems to Best Arm Identification}. Therefore, we believe that the proof itself constitutes a significant contribution.
    \vspace{0.05in}
    \item We present two efficient methods to solve the minimization sub-problem (resp. the entire max-min program) that defines the characteristic time. These methods are used respectively in the stopping and sampling rule of our Track-and-Stop algorithm, whose sample complexity matches the lower bound when $\delta$ tends to 0. 
    \vspace{0.05in}
    \item Finally, to corroborate our asymptotic results, we conduct numerical experiments for a wide range of the confidence parameters and number of arms. Empirical evaluation shows that Track-and-Stop is optimal either for a small number of arms $K$ or when $\delta$ goes to 0, and excellent in practice for much larger values of $K$ and $\delta$. We believe these are significant improvements in performance to be of interest for ML practitioners seeking solutions for this kind of problem.
\end{itemize} 

Similar to previous works, we restrict our attention to bandits with rewards coming from a Gaussian distribution with variance one. Even though this assumption is not mandatory, it considerably simplifies the presentation of the results\footnote{For $\sigma^2$-subgaussian distributions, we only need to  multiply our bounds by $\sigma^2$. For bandits coming from another single-parameter exponential family, we lose the closed-form expression of the best response oracle that we have in the Gaussian case, but one can use binary search to solve the best response problem.}. Section \ref{sec:LB} is devoted to our lower bounds on the sample complexity of identifying the set of $\varepsilon$-good arms and the pseudo-code of our algorithm, along with the theoretical guarantees on its sample complexity. In Sections \ref{sec:min} and \ref{sec:max_min}, we present our method for solving the optimization program that defines the characteristic time, which is at the heart of the sampling and stopping rules of our algorithm.

\section{Lower bounds and asymptotically matching algorithm}\label{sec:LB}
We start by proving a lower bound on the sample complexity of any $\delta$-correct algorithm. This lower bound will later motivate the design of our algorithm.
\subsection{First lower bound}
Let $\simplex{K}$ denote the $K$-dimensional simplex and $\kl(p, q)$ be the KL-divergence between two Bernoulli distributions with parameters $p$ and $q$. Finally, define the set of \emph{alternative} bandit problems $\Alt\bmu = \{\blambda\in \dsR^K: \G{\bmu} \neq \G{\blambda}\}$.
Using change-of-measure arguments introduced by \cite{lai1985} 
, we derive the following lower bound on the sample complexity in our special setting.

\begin{proposition}
  \label{proposition:lb}
  For any $\delta$-correct strategy and any bandit instance $\bmu$, the expected stopping time $\tau_\delta$ can be lower-bounded as
  \[
    \E[\tau_\delta] \ge \CT\log(1/2.4\delta)
  \]
  where \begin{align}
     & \CT^{-1} \triangleq \sup_{\bomega\in\simplex{K}} T_{\varepsilon}(\bmu,\bomega)^{-1} \qquad \hbox{and} \label{eq:CT}                                      \\
     & T_{\varepsilon}(\bmu,\bomega)^{-1} \triangleq \inf_{\blambda\in\Alt\bmu}\suma\omega_a\frac{(\mu_a - \lambda_a)^2}{2} \;.\label{eq:CT_inf}
  \end{align}
\end{proposition}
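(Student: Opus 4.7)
The plan is to follow the now-standard change-of-measure template of \cite{lai1985} as refined in the bandit literature, since the particular structure of the All-$\varepsilon$-best-arms problem enters the lower bound only through the definition of $\Alt{\bmu}$.

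First I would invoke the transportation inequality: for any $\delta$-correct strategy, any sampling rule induces a random vector of pulls $(N_a(\tau_\delta))_{a\in[K]}$, and for any alternative $\blambda\in\Alt{\bmu}$ and any event $\cE$ measurable with respect to the history at time $\tau_\delta$, one has
\[
\suma \E_\bmu[N_a(\tau_\delta)]\,\KL(\bnu_a,\blambda_a) \;\geq\; \kl\!\bigl(\P_\bmu(\cE),\,\P_\blambda(\cE)\bigr).
\]
For Gaussian rewards with unit variance, $\KL(\bnu_a,\blambda_a)=(\mu_a-\lambda_a)^2/2$, so the left-hand side is exactly $\E_\bmu[\tau_\delta]\suma \omega_a (\mu_a-\lambda_a)^2/2$ once we set $\omega_a \triangleq \E_\bmu[N_a(\tau_\delta)]/\E_\bmu[\tau_\delta]$, which lies in $\simplex{K}$.

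Next I would choose the event $\cE = \{\widehat{G}_\varepsilon = \G{\bmu}\}$. By $\delta$-correctness, $\P_\bmu(\cE)\geq 1-\delta$; and since $\blambda\in\Alt{\bmu}$ we have $\G{\blambda}\neq\G{\bmu}$, which (again by $\delta$-correctness, now applied to the instance $\blambda$) yields $\P_\blambda(\cE)\leq \delta$. Monotonicity of $\kl$ then gives $\kl(\P_\bmu(\cE),\P_\blambda(\cE))\geq \kl(1-\delta,\delta)=\kl(\delta,1-\delta)$, for which the classical estimate $\kl(\delta,1-\delta)\geq \log\!\bigl(1/(2.4\delta)\bigr)$ holds for all $\delta\in(0,1)$.

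Combining the two displays and rearranging,
\[
\E_\bmu[\tau_\delta]\;\geq\;\frac{\log(1/(2.4\delta))}{\suma \omega_a (\mu_a-\lambda_a)^2/2}.
\]
Since this holds for every $\blambda\in\Alt{\bmu}$, I can take the infimum over $\blambda$ in the denominator, which by \eqref{eq:CT_inf} yields $T_\varepsilon(\bmu,\bomega)^{-1}$. The vector $\bomega$ is determined by the strategy and is not at our disposal, but the inequality $\E_\bmu[\tau_\delta]\geq T_\varepsilon(\bmu,\bomega)\log(1/(2.4\delta))$ is only strengthened by replacing $T_\varepsilon(\bmu,\bomega)$ with $\sup_{\bomega'\in\simplex{K}} T_\varepsilon(\bmu,\bomega')=\CT$, giving the stated bound.

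I do not expect any technical obstacle: the mechanics are routine and Gaussian-specific simplifications are immediate. The only point requiring mild care is the measurability of the event $\cE$ with respect to the stopped $\sigma$-algebra $\cF_{\tau_\delta}$ so that the transportation lemma applies, and the verification that the definition of $\Alt{\bmu}$ (via $\G{\cdot}$) is compatible with applying $\delta$-correctness on both $\bmu$ and $\blambda$ to get $\P_\blambda(\cE)\leq \delta$. The substantive content of the paper lies not in this proof but in characterizing $\Alt{\bmu}$ explicitly and evaluating the sup-inf in \eqref{eq:CT}, which are deferred to later sections.
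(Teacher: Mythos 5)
Your proposal is correct and follows essentially the same route as the paper: the transportation lemma of Kaufmann et al.\ applied to the event $\{\widehat{G}_\varepsilon = \G{\bmu}\}$, the identification $\omega_a = \E[N_a]/\E[\tau_\delta] \in \simplex{K}$, the infimum over $\Alt{\bmu}$ followed by the supremum over the simplex, and the bound $\kl(\delta,1-\delta) \ge \log(1/(2.4\delta))$. The only difference is that you spell out the choice of event and the monotonicity step that the paper leaves implicit.
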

The characteristic time $\CT$ above is an instance-specific quantity that determines the difficulty of our problem. The optimization problem in the definition of $\CT$ can be seen as a two-player game between an algorithm which samples each arm $a$ proportionally to $\omega_a$ and an adversary who chooses an alternative instance $\blambda$ that is difficult to distinguish from $\bmu$ under the algorithm's sampling scheme. This suggests that an optimal strategy should play the optimal allocation $\bomega^*$ that maximizes the optimization problem (\ref{eq:CT}) and, as a consequence, rules out all alternative instances as fast as possible. This motivates our algorithm, presented in Section \ref{sec:algo}.

\subsection{Algorithm}\label{sec:algo}
We propose a simple Track-and-Stop strategy similar to the one proposed by \cite{GK16} for the problem of Best-Arm Identification. It starts by sampling once from every arm $a\in[K]$ and constructs an initial estimate $\widehat{\bmu}_K$ of the vector of mean rewards $\bmu$. After this burn-in phase, the algorithm enters a loop where at every iteration it samples according to the estimated optimal sampling rule (\ref{eq:sampling_rule}) and updates its estimate $\widehat{\bmu}_t$ of the arms' expectations. Finally, the algorithm checks if the stopping rule (\ref{eq:stopping_rule}) is satisfied, in which case it stops and returns the set of empirically $\varepsilon$-good arms.
\paragraph{\bfseries Sampling rule:}
our sampling rule performs so-called C-tracking: first, we compute $\widetilde{\bomega}(\widehat{\bmu}_t)$, an allocation vector which is $\frac{1}{\sqrt{t}}$-optimal in the lower-problem (\ref{eq:CT}) for the instance $\widehat{\bmu}_t$. Then we project $\widetilde{\bomega}(\widehat{\bmu}_t)$ on the set $\Delta_K^{\eta_t} = \Delta_K \cap [\eta_t,1]^K$. Given the projected vector $\widetilde{\bomega}^{\eta_t}(\widehat{\bmu}_t)$, the next arm to sample from is defined by:
\begin{equation}
    a_{t+1} = \argmin\limits_{a} N_a(t) - \sum\limits_{s=1}^t \widetilde{\bomega}_a^{ \eta_t}(\widehat{\bmu}_s)
    \label{eq:sampling_rule}
\end{equation}
where $N_a(t)$ is the number of times arm $a$ has been pulled up to time $t$. In other words, we sample the arm whose number of visits is farther behind its corresponding sum of empirical optimal allocations. In the long run, as our estimate $\widehat{\bmu}_t$ tends to the true value $\bmu$, the sampling frequency $N_a(t)/t$ of every arm $a$ will converge to the oracle optimal allocation $\bomega^*_a(\bmu)$. The projection on $\Delta_K^{\eta_t}$  ensures exploration at minimal rate $\eta_t = \frac{1}{2\sqrt{(K^2 + t)}}$ so that no arm is left-behind because of bad initial estimates.

\paragraph{\bfseries Stopping rule:}
To be sample-efficient, the algorithm should should stop as soon as the collected samples are sufficiently informative to declare that $G_\varepsilon(\widehat{\bmu}_t) = G_\varepsilon(\bmu)$ with probability larger than $1-\delta$. For this purpose we use the Generalized Likelihood Ratio (GLR) test \cite{Chernoff59}. We define the $Z$-statistic:
\begin{equation*}
    Z(t) = t \times T_\varepsilon\bigg(\widehat{\bmu}_t, \frac{N(t)}{t} \bigg)^{-1}
\end{equation*}
where $N(t) = \big(N_a(t)\big)_{a\in [K]}$. As shown in \cite{GK16,GK19PAC}, the Z-statistic is equal to the ratio of the likelihood of observations under the most likely model where $G_\varepsilon(\widehat{\bmu}_t)$ is the correct answer, \ie $\widehat{\bmu}_t$, to the likelihood of observations under the most likely model where $G_\varepsilon(\widehat{\bmu}_t)$ is not the set of $\varepsilon$-good arms. The algorithm rejects the hypothesis $G_\varepsilon(\widehat{\bmu}_t) \neq G_\varepsilon(\bmu)$ and stops as soon as this ratio of likelihoods becomes larger than a certain threshold $\beta(\delta,t)$, properly tuned to ensure that the algorithm is $\delta$-PAC. The stopping rule is defined as:
\begin{equation}
    \tau_\delta = \inf\big\{ t \in \mathbb{N}\ :\ Z(t) > \beta(t,\delta) \big\}
    \label{eq:stopping_rule}
\end{equation}

One can find many suitable thresholds from the bandit literature \cite{Garivier2013}, \cite{magureanu2014}, \cite{Kaufmann18MixtureMR}, all of which are of the order $\beta(\delta,t) \approx \log(1/\delta) + \frac{K}{2} \log(\log(t/\delta)) $ is enough to ensure that $\P\big(G_\varepsilon(\widehat{\bmu}_{\tau_\delta}) \neq G_\varepsilon(\bmu)\big) \leq \delta$, \ie that the algorithm is $\delta$-correct.

\begin{algorithm}\label{main_algo} 
    \caption{Track and Stop}
        \KwIn{Confidence level $\delta$, accuracy parameter $\varepsilon$.}
        Pull each arm once and observe rewards $(r_a)_{a \in [K]}$.\\
        Set initial estimate $\widehat{\bmu}_K = (r_1,\ldots,r_K)^{T}$.\\
        Set $t \leftarrow K$ and $N_a(t) \leftarrow 1$ for all arms $a$.\\
        \While{Stopping condition (\ref{eq:stopping_rule}) is not satisfied }
        {
        Compute $\Tilde{\bomega}(\widehat{\bmu}_t)$, a $\frac{1}{\sqrt{t}}$-optimal vector for (\ref{eq:CT}) using mirror-ascent.\\
        Pull next arm $a_{t+1}$ given by (\ref{eq:sampling_rule}) and observe reward $r_t$.\\
        Update $\widehat{\bmu}_t$ according to $r_t$.\\
        Set $ t \leftarrow t+1$ and update $\big(N_a(t)\big)_{a \in [K]}$.
        }
        \KwOut{Empirical set of $\varepsilon$-good arms: $G_\varepsilon(\widehat{\bmu}_{\tau_\delta})$}
\end{algorithm}
Now we state our sample complexity result which we adapted from Theorem 14 in \cite{GK16}. Notably, while their Track-and-Stop strategy relies on tracking the exact optimal weights to prove that the expected stopping time matches the lower bound when $\delta$ tends to zero, our proof shows that it is enough to track some slightly sub-optimal weights with a decreasing gap in the order of $\frac{1}{\sqrt{t}}$ to enjoy the same sample complexity guarantees.
\begin{theorem}
    For all $\delta \in (0,1)$, Track-and-Stop terminates almost-surely and its stopping time $\tau_\delta$ satisfies:
    \begin{equation*}
        \limsup\limits_{\delta\to 0} \frac{\E[\tau_\delta]}{\log(1/\delta)} \leq \CT.\
    \end{equation*}
\label{thm:upper_bound}
\end{theorem}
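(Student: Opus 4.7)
The plan is to follow the Garivier-Kaufmann template for Track-and-Stop sample complexity analysis, adapted to handle (i) the more complex characteristic time $\CT$ for this problem and (ii) the fact that we only track $\frac{1}{\sqrt{t}}$-approximately optimal weights rather than exact optimizers of (\ref{eq:CT}). The argument proceeds in two stages: first establish almost-sure convergence of the empirical proportions $N(t)/t$ to the oracle set of optimal allocations $\bomegastar$, then translate this into an upper bound on $\tau_\delta$ via the GLR statistic.

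For the consistency stage, I would first invoke the forced exploration floor $\eta_t = 1/(2\sqrt{K^2+t})$ and the C-tracking lemma of \cite{GK16} to conclude the two-sided bound $N_a(t) \geq \sqrt{t+K^2} - 2K$. This guarantees that each arm is sampled infinitely often at a polynomial rate, so by standard sub-Gaussian concentration and Borel-Cantelli, $\widehat{\bmu}_t \to \bmu$ almost surely with polynomial tails on the convergence rate. Next, I would argue that the value map $\bmu \mapsto \CT^{-1}$ and the joint map $(\bmu,\bomega)\mapsto T_\varepsilon(\bmu,\bomega)^{-1}$ are continuous (the former as an inf over a closed set that depends continuously on $\bmu$, the latter being a continuous function of both arguments on the positive orthant). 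Combining continuity with the $\frac{1}{\sqrt{t}}$-optimality of $\widetilde\bomega(\widehat\bmu_t)$ and the standard C-tracking identity $\big| N_a(t) - \sum_{s\le t}\widetilde\omega_a^{\eta_s}(\widehat\bmu_s) \big| = O(K)$, a Cesàro-type argument yields $T_\varepsilon(\widehat\bmu_t, N(t)/t)^{-1} \to \CT^{-1}$ almost surely.

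For the stopping stage, fix $\zeta>0$ and define a random time $T_\zeta$ after which simultaneously $\|\widehat\bmu_t - \bmu\|_\infty \leq \zeta$ and $T_\varepsilon(\widehat\bmu_t, N(t)/t)^{-1} \geq (1-\zeta)\CT^{-1}$. The tails on the convergence of $\widehat\bmu_t$ from Step 1 ensure $\P(T_\zeta > t)$ decays polynomially fast enough that $\E[T_\zeta]<\infty$. For $t \ge T_\zeta$ the stopping statistic satisfies $Z(t) \ge (1-\zeta)\CT^{-1} t$, and since $\beta(t,\delta) = \log(1/\delta) + \frac{K}{2}\log\log(t/\delta) + o(\log(1/\delta))$, the stopping condition is met as soon as $t \ge (1+o(1))(1-\zeta)^{-1}\CT \log(1/\delta)$. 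Bounding $\tau_\delta \le T_\zeta + (1+o(1))(1-\zeta)^{-1}\CT\log(1/\delta)$, taking expectation, and then $\zeta\to 0$ after dividing by $\log(1/\delta)$ gives the claimed $\limsup$. Almost-sure termination follows from the fact that $Z(t)/t$ has a strictly positive limit while $\beta(t,\delta)/t \to 0$.

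The main obstacle, which distinguishes this proof from \cite{GK16}, is justifying that tracking only $\frac{1}{\sqrt{t}}$-optimal weights preserves the asymptotic optimality guarantee. This requires a uniform continuity statement for $\bomega \mapsto T_\varepsilon(\bmu,\bomega)^{-1}$ on a neighborhood of the oracle optimum: one must show that feeding the tracker with near-optimal weights at scale $\frac{1}{\sqrt{t}}$ produces empirical proportions whose induced value is within $o(1)$ of $\CT^{-1}$. The set $\bomegastar$ need not be a singleton, so the argument passes through the value function rather than through convergence of the allocation itself; the key technical point is that the $\frac{1}{\sqrt{t}}$ sub-optimality budget is compatible with the $\eta_t \sim 1/\sqrt{t}$ exploration floor and with the $O(K)$ C-tracking deficit, so all three approximation errors vanish in the Cesàro average without interfering with the final $\liminf$ of $Z(t)/t$.
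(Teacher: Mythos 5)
Your proposal is correct and follows essentially the same route as the paper's proof: the C-tracking lemma of \cite{GK16}, concentration of $\widehat{\bmu}_t$ with summable tails from the forced-exploration floor, a Ces\`aro argument combining concavity in $\bomega$ with Lipschitz continuity in both arguments to absorb the tracking deficit and the $\frac{1}{\sqrt{s}}$ sub-optimality budget, and the comparison of $Z(t)$ with $\beta(t,\delta)$. The only cosmetic difference is that you phrase the final step via a random time $T_\zeta$ with finite expectation, whereas the paper bounds $\P(\tau_\delta>T)\leq\P(\mathcal{E}_T^c)$ on the good event $\mathcal{E}_T$ and sums the tail; these are equivalent.
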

{\bfseries Remark 1.}
    Suppose that the arms are ordered  decreasingly $\mu_1\ge \mu_2 \ge \dots\ge \mu_K$. \cite{Mason2020} define the upper margin $\alpha_\varepsilon = \min\limits_{k\in\G\bmu} \mu_k -(\mu_1 -\varepsilon)$ and provide a lower bound of the form $f(\nu)\log(1/\delta)$ where:
    \begin{align*}
        f(\nu) \triangleq 2 \sum\limits_{a=1}^{K} \max\bigg(\frac{1}{(\mu_1 - \varepsilon -\mu_i)^2},\frac{1}{(\mu_1 + \alpha_\varepsilon -\mu_a)^2}\bigg).
    \end{align*}
    It can be seen directly (or deduced from Theorem \ref{thm:upper_bound}) that $f(\nu) \leq \CT$. In a second step, they proposed $\mathrm{FAREAST}$, an algorithm  whose sample complexity in the asymptotic regime $\delta \to 0$ matches their bound up to some universal constant $c$ that does not depend on the instance $\nu$. From Proposition \ref{proposition:lb}, we deduce that $\CT \leq c f(\nu)$, which can be seen directly from the particular changes of measure considered in that paper. The sample complexity of our algorithm improves upon previous work by multiplicative constants that can possibly be large, as illustrated in Section~\ref{sec:experiments}.

\subsection{Lower bound for moderate confidence regime}\label{sec:simulator_lb}
The lower bound in Proposition \ref{proposition:lb} and the upper bound in Theorem \ref{thm:upper_bound} show that in the asymptotic regime $\delta\to 0$ the optimal sample complexity scales as $\CT\log(1/\delta)$. However, one may wonder whether this bound catches all important aspects of the complexity, especially for large or moderate values of the risk $\delta$. Towards answering this question, we present the following lower bound which shows that there is an additional cost, linear in the number of arms, that any $\delta$-PAC algorithm must pay in order to learn the set of All-$\epsilon$ good arms. Before stating our result, let us introduce some notation. We denote by $\mathbf{S}_K$ the group of permutations over $[K]$. For a bandit instance $\nu = (\nu_1, \ldots, \nu_K)$ we define the {\it permuted instance} $\pi(\nu) = (\nu_{\pi(1)}, \ldots, \nu_{\pi(K)})$. $\mathbf{S}_K(\nu) = \{\pi(\nu),\ \pi \in \mathbf{S}_K\}$ refers to the set of all permuted instances of $\nu$. Finally, we will write $\pi \sim \mathbf{S}_K$ to indicate that a permutation is drawn uniformly at random from $\mathbf{S}_K$. These results are much inspired from~\cite{Mason2020}, but come with quite different proofs that we hope can be useful to the community.
\begin{theorem}
Fix $\delta \leq 1/10$ and $\epsilon > 0$. Consider an instance $\nu$ such that there exists at least one bad arm: $G_\epsilon(\bmu)\neq [K]$. Without loss of generality, suppose the arms are ordered  decreasingly $\mu_1\ge \mu_2 \ge \dots\ge \mu_K$ and define the lower margin $\beta_\epsilon = \min\limits_{k\notin\G\bmu} \mu_1 -\varepsilon - \mu_k$. Then any $\delta$-PAC algorithm has an average sample complexity over all permuted instances satisfying 
\begin{align*}
    \E_{\pi \sim \mathbf{S}_K}\E_{\pi(\nu)}[\tau_\delta] \geq \frac{1}{12|G_{\beta_\epsilon}(\bmu)|^3}\sum_{b=1}^K \frac{1}{(\mu_1 - \mu_b + \beta_\epsilon)^2},
\end{align*}
\label{thm:simulator_LB}
\end{theorem}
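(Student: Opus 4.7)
The plan is to adapt the simulator technique of Simchowitz, Jamieson, and Recht directly to the All-$\varepsilon$ setting, bypassing the intermediate reduction to Best Arm Identification used by Mason et al. The key conceptual idea is that, by symmetrizing over permutations of the instance, a $\delta$-PAC algorithm must on average allocate enough pulls to \emph{every} arm to certify its correct good/bad categorization, and the per-arm cost scales as the inverse squared gap to the decision boundary $\mu_1 - \varepsilon$. I would proceed in four steps.

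\textbf{Step 1 (Confusing alternatives).} For each arm $b \in [K]$ I build an alternative bandit $\blambda^{(b)}$ that differs from $\bmu$ only on the reward distribution of arm $b$, chosen so that the status of $b$ flips: a bad arm is shifted just above the threshold $\mu_1-\varepsilon$, while a good arm is shifted just below it. The natural magnitude of this shift is $\mu_1-\mu_b+\beta_\varepsilon$ (up to a small slack), which is exactly what makes $G_\varepsilon(\blambda^{(b)}) \neq G_\varepsilon(\bmu)$ and therefore what a $\delta$-PAC algorithm must detect. For Gaussians this gives a per-sample KL on arm $b$ of order $(\mu_1-\mu_b+\beta_\varepsilon)^2$.

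\textbf{Step 2 (Direct simulator for good/bad categorization).} Next I would construct, for each $b$, a truncated coupling between the executions on $\bmu$ and on $\blambda^{(b)}$ that is faithful as long as the algorithm has not yet queried arm $b$ more than $T_b \propto 1/(\mu_1-\mu_b+\beta_\varepsilon)^2$ times. A standard likelihood-ratio / Pinsker argument then shows that, on this event, the transcripts under the two instances are indistinguishable in total variation. Crucially, because the event "$b$ is declared good" (resp. bad) is well-defined for the All-$\varepsilon$ problem, one can push this simulator directly through the $\delta$-PAC condition without first invoking a best-arm sub-routine---this is where the proof departs from and generalizes Mason et al.

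\textbf{Step 3 (Symmetrization and summation).} Averaging the algorithm over a uniformly random permutation $\pi\sim\mathbf{S}_K$ of the instance, the symmetry in the law of the transcript gives that for each position $i$ the expected number of pulls is the same as the expected number of pulls on the arm originally at position $i$. Combining this with the simulator bound of Step~2 applied to every arm $b$, and using the contrapositive (either $\mathbb{E}[N_b(\tau_\delta)]\gtrsim T_b$, or the algorithm cannot separate $\bmu$ from $\blambda^{(b)}$ well enough to be $\delta$-PAC for $\delta\le 1/10$), the individual lower bounds sum to $\sum_b 1/(\mu_1-\mu_b+\beta_\varepsilon)^2$.

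\textbf{Step 4 (Handling near-top clustering).} The factor $|G_{\beta_\varepsilon}(\bmu)|^3$ in the denominator arises because, when several arms lie within $\beta_\varepsilon$ of the top, modifying one of them affects which arms the algorithm must single out as $\varepsilon$-good. The simulator coupling has to remain valid uniformly over the permutations that land a target arm in the near-top cluster; a union bound combined with a careful counting of permutations consistent with a given swap yields the polynomial loss in $|G_{\beta_\varepsilon}(\bmu)|$, together with the universal constant $1/12$. I expect the main obstacle to be Step~2 combined with this Step~4: simultaneously controlling the truncated likelihood ratios across all $K$ alternatives while keeping the total-variation slack compatible with the $\delta\le 1/10$ condition, and tracking the precise combinatorial dependence on $|G_{\beta_\varepsilon}(\bmu)|$ without incurring a larger polynomial factor.
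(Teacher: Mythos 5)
Your high-level framing (symmetrize over permutations, use truncated ``bridge'' couplings, avoid the reduction to Best Arm Identification) is indeed the strategy of the paper, but Step~1 chooses the wrong family of alternatives, and this breaks the whole argument. A single-arm modification of $\bmu$ that flips the status of arm $b$ requires a shift of $\mu_1-\epsilon-\mu_b$ for a bad arm and $\epsilon-(\mu_1-\mu_b)$ for a good arm --- \emph{not} $\mu_1-\mu_b+\beta_\epsilon$. For the good arms this is fatal: take the paper's own example $\mu_1=\beta$, $\mu_a=-\beta$ for $2\le a\le K-1$, $\mu_K=-\epsilon$ with $\beta\ll\epsilon$. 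Flipping a middle arm's status requires moving it by $\epsilon-2\beta\approx\epsilon$, so your change of measure yields only $\E[N_a]\gtrsim 1/\epsilon^2$ per middle arm, i.e.\ a total of $O(1/\beta^2+K/\epsilon^2)$ --- exactly the order of the first lower bound $\CT\log(1/\delta)$, and arbitrarily far from the claimed $\Omega(K/\beta^2)$. No amount of coupling or symmetrization in Steps~2--4 can recover the factor $(\epsilon/\beta)^2$ you lose at Step~1, because the information-theoretic cost of distinguishing $\bmu$ from your $\blambda^{(b)}$ genuinely is only $O(1/\epsilon^2)$ samples on arm $b$.

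The quantity $\mu_1-\mu_b+\beta_\epsilon$ does not come from one alternative but from combining two different arguments, and this is the idea missing from your proposal. First, a \emph{swap} argument that uses no $\delta$-PAC property at all: for $a\in G_{\beta_\epsilon}(\bmu)$, couple $\nu$ and the instance with arms $a$ and $b$ exchanged through two non-stationary bridges that agree with them for the first $n$ pulls of the swapped arms; symmetry of the (symmetrized) algorithm plus Pinsker then gives $\P_\nu(N_a(\tau)>n)-(\mu_1-\mu_b)\sqrt{n/2}\le 3\,\P_\nu(N_b(\tau)>n)$, i.e.\ arm $b$ cannot be pulled much less than the near-top arms until it has been pulled $\Omega(1/(\mu_1-\mu_b)^2)$ times. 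Second, a $\delta$-PAC change of measure that lowers the \emph{entire cluster} $G_{\beta_\epsilon}(\bmu)$ by $\beta_\epsilon$ (making the best bad arm $\epsilon$-good), forcing $\sum_{a\in G_{\beta_\epsilon}(\bmu)}\P_\nu(N_a(\tau)>n)\ge 1-2\delta-|G_{\beta_\epsilon}(\bmu)|\beta_\epsilon\sqrt{n}/2$. Chaining the two gives $\P_\nu(N_b(\tau)>n)\gtrsim\frac{1}{|G_{\beta_\epsilon}(\bmu)|}-(\mu_1-\mu_b+\beta_\epsilon)\sqrt{n}$, and choosing $n\asymp(|G_{\beta_\epsilon}(\bmu)|(\mu_1-\mu_b+\beta_\epsilon))^{-2}$ produces both the sum $\sum_b(\mu_1-\mu_b+\beta_\epsilon)^{-2}$ and the $|G_{\beta_\epsilon}(\bmu)|^{3}$ in the denominator; your Step~4 attributes that factor to a union bound over permutations, which is not where it arises. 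To fix your proof, replace the status-flipping alternatives of Step~1 by these swap-with-the-top-cluster and lower-the-top-cluster alternatives.
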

The proof of the lower bound can be found in Appendix \ref{sec:proof_simulator}. In the special case where $|G_{2\beta_\epsilon}|=1$, then $|G_{\beta_\epsilon}|=1$ also (since $\{1 \}\subset G_{\beta_\epsilon} \subset G_{2\beta_\epsilon}$) and we recover the bound in Theorem 4.1 of \cite{Mason2020}. The lower bound above informs us that we must pay a linear cost in $K$, {\it even when there are several arms close to the top one}, provided that their cardinal does not scale with the total number of arms, i.e. $|G_{\beta_\epsilon}| = \cO(1)$.

{\noindent \bfseries The bound of Thm \ref{thm:simulator_LB} can be arbitrarily large compared to $\CT\log(1/\delta)$.} Fix  $\delta = 0.1$ and let $\epsilon, \beta > 0$ with $\beta \ll \epsilon$ and consider the instance such that $\mu_1 = \beta, \mu_K = -\epsilon$ and $\mu_a = -\beta$ for $a\in [|2,K-1|]$. Then we show in Appendix \ref{sec:proof_simulator} that $\CT\log(1/\delta) = \cO(1/\beta^2 + K/\epsilon^2)$. In contrast the lower bound above scales as $\Omega(K/\beta^2)$. Since $\beta \ll \epsilon$, the second bound exhibits a better scaling w.r.t the number of arms.

{\bfseries The intuition behind this result} comes from the following observations: first, note that arms in $G_{\beta_\epsilon}(\bmu)$ must be sampled at least $\Omega(1/\beta_\epsilon^2)$ times, because otherwise we might underestimate their means and misclassify the arms in $\argmin_{k\notin\G\bmu} \mu_1 -\varepsilon - \mu_k$ as good arms. Second, in the initial phase the algorithm does not know which arms belong to $G_{\beta_\epsilon}(\bmu)$ and we need at least $\Omega(1/(\mu_1-\mu_b)^2)$ samples to distinguish any arm $b$ from arms in $G_{\beta_\epsilon}(\bmu)$. Together, these observations tell us that we must pay a cost of $\Omega(\min(1/\beta_\epsilon^2, 1/(\mu_1-\mu_b)^2))$ samples to either declare that $b$ is not in $G_{\beta_\epsilon}(\bmu)$ or learn its mean up to $\cO(\beta_\epsilon)$ precision. More generally, consider a pure exploration problem with a unique answer, where some particular arm $i^\star$ \footnote{or a subset of arms, as in our case.} needs to be estimated up to some precision $\eta > 0$ in order to return the correct answer. In this case, one can adapt our proof, {\it without using any algorithmic reduction to Best Arm Identification}, to show that every arm $a$ must be played at least $\Omega(1/(|\mu_{i^\star} - \mu_a|+ \eta)^2)$ times. For example, consider the problem of testing whether the minimum mean of a multi-armed bandit is above or below some threshold $\gamma$. Let $\nu$ be an instance such that $\{a\in [K]:\ \mu_a < \gamma \} \neq \emptyset$. Define $\eta = \min_{a : \mu_a < \gamma} \gamma - \mu_{a} > 0$ and let $i^\star$ to be the arm that achieves this minimum. Then our proof can be adapted in a straightforward fashion to prove that any $\delta$-PAC has a sample complexity of at least $\Omega\big(\sum_{a=1}^K \frac{1}{(\mu_a - \mu_{i^\star} + \eta)^2}\big)$.\footnote{
The phenomenon discussed above is essentially already discussed in~\cite{Mason2020}, a very rich study of the problem. However, we do not fully understand the proof of Theorem 4.1.
Define a sub-instance to be a bandit $\widetilde{\nu}$ with fewer arms $m \leq K$ such that $\{\widetilde{\nu}_1,\ldots, \widetilde{\nu}_{m}\} \subset \{\nu_1, \ldots, \nu_K\}$.  Lemma D.5 actually shows that there exists some sub-instance of $\nu$ on which the algorithm must pay $\Omega(\sum_{b=2}^{m} 1/(\mu_1-\mu_b)^2)$ samples. But this does not imply that such cost must be paid for the instance of interest $\nu$ instead of some sub-instance with very few arms.
}

Note that Algorithm \ref{main_algo} requires to solve the best response problem, \ie the minimization problem in (\ref{eq:CT_inf}), in order to be able to compute the $Z$-statistic of the stopping rule, and also to solve the entire lower bound problem in (\ref{eq:CT}) to compute the optimal weights for the sampling rule. The rest of the paper is dedicated to presenting the tools necessary to solve these two problems.

\section{Solving the min problem: Best response oracle}\label{sec:min}
For a given vector $\bomega$, we want to compute the best response
\begin{equation}
  \blambdastar \triangleq \argmin_{\blambda\in\Alt\bmu} \suma \omega_a\frac{(\mu_a - \lambda_a)^2}{2}.
  \label{eq:objective}
\end{equation}
For the simplicity of the presentation, we assume that the arms are ordered  decreasingly $\mu_1\ge \mu_2 \ge \dots\ge \mu_K$ and start by presenting the additive case (\ie $\G{\bmu} \triangleq \{ a\in[K]:\mu_a \geq \max\limits_{i} \mu_i - \varepsilon \}$). The multiplicative case can be treated in the same fashion and is deferred to appendix \ref{sec:multiplicative}. Finally, we denote by $\B{\bmu} \triangleq [K]\setminus \G{\bmu}$ the set of bad arms.


Since an alternative problem $\blambda\in\Alt\bmu$ must have a different set of $\varepsilon$-optimal arms than the original problem $\bmu$, we can obtain it from $\bmu$ by changing the expected reward of some arms. We have two options to create an alternative problem $\blambda$:
\begin{itemize}
  \item \textbf{Making one of the $\varepsilon$-optimal arms bad}. We can achieve it by decreasing the expectation of some $\varepsilon$-optimal arm $k$ while increasing the expectation of some other arm $\ell$ to the point where $k$ is no more $\varepsilon$-optimal. This is illustrated in Figure \ref{img:FN}.
  \item \textbf{Making one of the $\varepsilon$-sub-optimal arms good.} We can achieve it by increasing the expectation of some sub-optimal arm $k$ while decreasing the expectations of the arms with the largest means -as many as it takes- to the point where $k$ becomes $\varepsilon$-optimal. This is illustrated in Figure \ref{img:FN}.
\end{itemize}

\begin{figure}[h]
    \centering
    \includegraphics[width=0.49\textwidth]{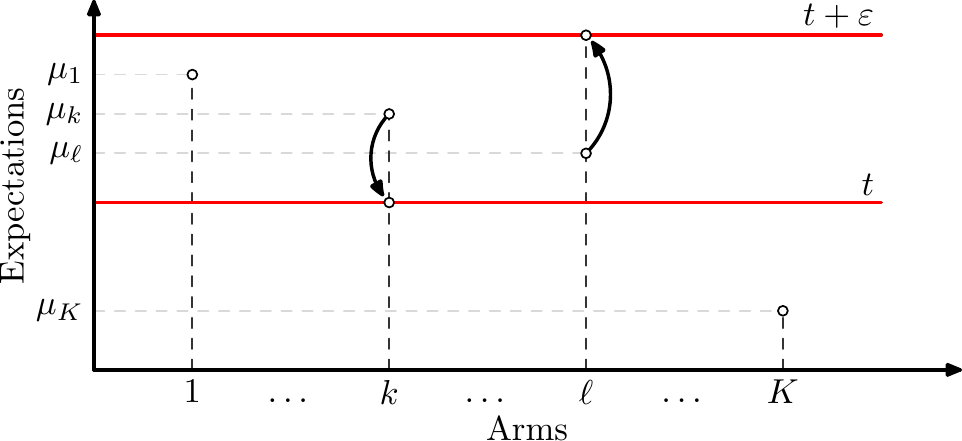}
    \includegraphics[width=0.5\textwidth]{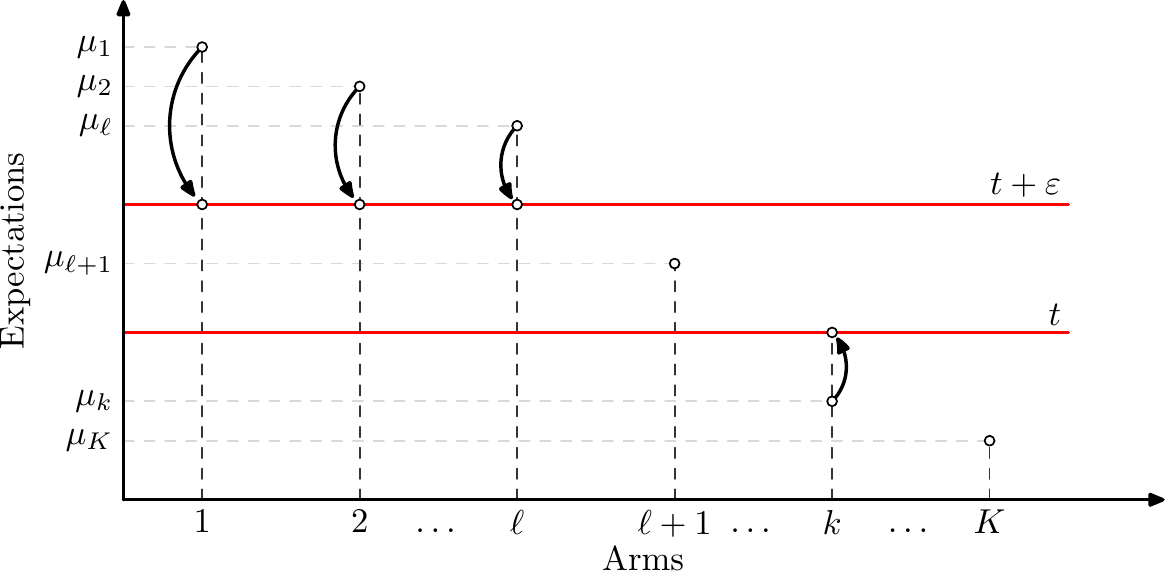}
    \caption{Left: Making One of the $\varepsilon$-Optimal Arms Bad. Right: Making One of the $\varepsilon$-Sub-Optimal Arms Good.}
    \label{img:FN}
  \end{figure}




In the following, we solve both cases separately.

\paragraph{\bf Case 1: Making one of the $\varepsilon$-optimal arms bad.}

Let $k \in \G\bmu$ be one of the $\varepsilon$-optimal arms. In order to make arm $k$ sub-optimal, we need to set the expectation of arm $k$ to some value $ \lambda_k = t$ and the maximum expectation over all arms to $\max\limits_{a} \lambda_a = t+\varepsilon$. Note that the index of the arm $\ell$ with maximum expectation can be chosen in $\G\bmu$. Indeed, if we choose some arm from $\B\mu$ to become the arm with maximum expectation in $\lambda$ then we would make an $\varepsilon$-suboptimal arm good which is covered in the other case below. The expectations of all the other arms should stay the same as in the instance $\bmu$, since changing their values would only increase the value of the objective. Now given indices $k$ and $\ell$, computing the optimal value of $t$ is rather straightforward since the objective function simplifies to
\[
  \omega_k\frac{(\mu_k-t)^2}{2} + \omega_\ell\frac{(\mu_\ell-t-\varepsilon)^2}{2}
\]
for which the optimal value of $t$ is:
\[
  t = \avg \triangleq \frac{\omega_k\mu_k + \omega_\ell(\mu_\ell - \varepsilon) }{\omega_k+\omega_\ell}.
\]
and the corresponding alternative bandit is:
\[
  \blambdakl \triangleq (\mu_1,\dots,\underbrace{\avg}_{\mathrm{index\ }k},\dots,\underbrace{\avg+\varepsilon}_{\mathrm{index\ }\ell},\dots,\mu_K )\transpose\!.
\]
The last step is taking the pair of indices $(k,\ell) \in \G\bmu\times(\G\bmu\setminus\{k\})$ with the minimal value in the objective (\ref{eq:CT_inf}).

\paragraph{\bf Case 2: Making one of the sub-optimal arms good.}

Let $k\in \B\bmu$ be a sub-optimal arm, if such arm exists, and denote by $t$ the value of its expectation in $\blambda$. In order to make this arm $\varepsilon$-optimal, we need to decrease the expectations of all the arms that are above the threshold $t+\varepsilon$. We pay a cost of $\frac{1}{2} \omega_k (t-\mu_k)^2$ for moving arm $k$ and of $\frac{1}{2} \omega_i (t+\varepsilon-\mu_i)^2$ for every arm $i$ such that $\mu_i > t+\varepsilon$.
Consider the functions:
\[
  f_k(t) = \frac{1}{2}\omega_k(t - \mu_k)^2
\]
and for $i\in [K]\setminus\{k\}$
\[
  f_i(t) =
  \begin{cases}
    \begin{array}{ll}
      \frac{1}{2}\omega_i (t+\varepsilon - \mu_i)^2 & \mathrm{for}\ t < \mu_i -\varepsilon,   \\
      0                                             & \mathrm{for}\ t\geq \mu_i -\varepsilon.
    \end{array}
  \end{cases}
\]
Each of these functions is convex. Therefore the function $f(t) = \sum\limits_{i=1}^K f_i(t)$ is convex and has a unique minimizer $t^*$. One can easily check that $f'(\mu_k) \leq 0$ and $f'(\mu_1-\varepsilon) \geq 0$, implying that $\mu_k - \varepsilon < \mu_k \leq t^* \leq \mu_1 - \varepsilon$. Therefore:
\[
  \ell = \min \{i \geq 1 \ :\ t^* > \mu_i - \varepsilon \} -1
\]
is well defined and satisfies $\ell \in [|1,k-1|]$. Note that by definition $\mu_{\ell+1} - \varepsilon < t^*$  and $t^* \leq \mu_a - \varepsilon$ for all $a\leq \ell$, hence:
\[
  0 = f'(t^*) = \omega_k (t^* - \mu_k) + \suml \omega_a (t^* + \varepsilon - \mu_a).
\]
Implying that\footnote{$\avg$ has a different definition depending on $k$ being a good or a bad arm.}:
\[
  t^* = \avg \triangleq \frac{\omega_k\mu_k + \suml \omega_a(\mu_a-\varepsilon)}{\omega_k + \suml \omega_a}
\]
and the alternative bandit in this case writes as:
\[
  \blambdakl \triangleq ( \underbrace{\avg+\varepsilon}_{\mathrm{indices\ }1 \mathrm{to\ } \ell},\mu_{\ell+1},\dots
  ,\underbrace{\avg}_{\mathrm{index\ }k},\dots,\mu_K )\transpose.
\]
Observe that since $\ell$ depends on $t^*$, we can't directly compute $t^*$ from the expression above. Instead, we use the fact that $\ell$ is unique by definition. Therefore, to determine $t^*$ one can compute $\avg$ for all values of $\ell \in [|1,k-1|]$ and search for the index $\ell$ satisfying $\mu_{\ell+1} -\varepsilon <\avg \leq \mu_\ell - \varepsilon$ and with minimum value in the objective (\ref{eq:CT_inf}).



As a summary, we have reduced the minimization problem over the infinite set $\Alt\bmu$ to a combinatorial search over a finite number of alternative bandit instances whose analytical expression is given in the next definition.
\begin{definition}
  Let $\blambdakl$ be a vector created form $\bmu$ by replacing elements on positions $k$ and $\ell$ (resp. 1 to $\ell$), defined as:
  \[
    \blambdakl \triangleq (\mu_1,\dots,\underbrace{\avg}_{\mathrm{index\ }k},\dots,\underbrace{\avg+\varepsilon}_{\mathrm{index\ }\ell},\dots,\mu_K )\transpose
  \]
  for $k\in\G{\bmu}$ and
  \[
    \blambdakl \triangleq ( \underbrace{\avg+\varepsilon}_{\mathrm{indices\ }1 \mathrm{to\ } \ell},\mu_{\ell+1},\dots
    ,\underbrace{\avg}_{\mathrm{index\ }k},\dots,\mu_K )\transpose
  \]
  for $k\in\B{\bmu}$ where $\avg$ is a weighted average of elements on positions $k$ and $\ell$ (resp. 1 to $\ell$) defined as:
  \[
    \avg \triangleq \frac{\omega_k\mu_k + \omega_\ell(\mu_\ell-\varepsilon)}{\omega_k + \omega_\ell}
  \]
  for $k\in\G{\bmu}$ and
  \[
    \avg \triangleq  \frac{\omega_k\mu_k + \sum_{a = 1}^\ell \omega_a(\mu_a-\varepsilon)}{\omega_k + \sum_{a = 1}^\ell \omega_a}
  \]
  for $k\in\B{\bmu}$.
 \label{def:1}
\end{definition}
The next lemma then states that the best response oracle belongs to the finite set of $(\blambdakl)_{k,\ell}$.
\begin{lemma}\label{lem:bestresponse}
  Using the previous definition, $\blambdastar$ can be computed as
  \[
    \blambdastar = \argmin_{\blambda\in \Lambda_G\cup\Lambda_B} \suma \omega_a\frac{(\mu_a - \lambda_a)^2}{2}
  \]
  where
  \[
    \Lambda_G = \{\blambdakl : k\in\G{\bmu}, \ell\in \G{\bmu}/\{k\}\}
  \]
  and
  \begin{align*}
    \Lambda_B = \{&\blambdakl : k  \in\B{\bmu}, \ell \in [|1,k-1|]\\         &\mathrm{s.t.}\ \mu_\ell \ge \avg + \varepsilon > \mu_{\ell+1}\}.
  \end{align*}
\end{lemma}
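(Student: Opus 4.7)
The plan is to show that the infimum of the quadratic objective over $\Alt\bmu$ is attained on its closure by one of the explicit candidates in $\Lambda_G \cup \Lambda_B$. Since the objective is continuous, strictly convex in $\blambda$, and coercive, its infimum on $\overline{\Alt\bmu}$ is attained. I would first partition $\overline{\Alt\bmu}$ according to the reason why $G_\varepsilon(\blambda) \neq G_\varepsilon(\bmu)$: every $\blambda \in \overline{\Alt\bmu}$ satisfies at least one of (I) some $k \in \G\bmu$ has $\lambda_k + \varepsilon \leq \max_a \lambda_a$, or (II) some $k \in \B\bmu$ has $\lambda_k + \varepsilon \geq \max_a \lambda_a$. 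Each case reduces the infinite-dimensional infimum to a finite-dimensional constrained quadratic program whose closed-form minimizer is an element of $\Lambda_G$ or $\Lambda_B$.

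For Case (I), fixing $k \in \G\bmu$ and a witness index $\ell \neq k$ with $\lambda_\ell \geq \lambda_k + \varepsilon$, I would first minimize out the free coordinates $\lambda_a$ for $a \notin \{k,\ell\}$; these are unconstrained and hence optimal at $\lambda_a = \mu_a$. For the two remaining variables I would then minimize $\omega_k(\mu_k-\lambda_k)^2/2 + \omega_\ell(\mu_\ell-\lambda_\ell)^2/2$ subject to $\lambda_\ell \geq \lambda_k + \varepsilon$. Because the unconstrained optimum $(\mu_k,\mu_\ell)$ violates (or just saturates) this constraint, the active-constraint minimum has $\lambda_\ell = \lambda_k + \varepsilon$, and substitution yields a univariate quadratic in $\lambda_k$ whose minimizer is exactly $\avg$ from Definition~\ref{def:1}. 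To restrict $\ell \in \G\bmu\setminus\{k\}$, I would observe that a minimizer with $\ell \in \B\bmu$ makes $\ell$ an $\varepsilon$-good arm in $\blambda$, in which case the same vector is produced by Case (II) with $\ell$ as the newly-good arm, and is therefore already captured by $\Lambda_B$.

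For Case (II), fixing $k \in \B\bmu$ and writing $t = \lambda_k$, the constraints $\lambda_a \leq t + \varepsilon$ decouple across $a\neq k$, so the cheapest choice is $\lambda_a = \min(\mu_a, t+\varepsilon)$ with per-arm cost $f_a(t)$ as defined in the text. The decreasing ordering of the means implies that the set of clipped arms has the form $\{1,\dots,\ell\}$ for some threshold $\ell$, making $f(t) = \sum_a f_a(t)$ piecewise-quadratic and convex. Inspection of one-sided derivatives shows $f'(\mu_k) \leq 0 \leq f'(\mu_1-\varepsilon)$, so the unique minimizer $t^*$ lies in $[\mu_k,\mu_1-\varepsilon]$ and the associated $\ell$ lies in $\{1,\dots,k-1\}$. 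On the active piece, setting $f'(t^*)=0$ yields $t^* = \avg$ from Definition~\ref{def:1}, and the compatibility $\mu_{\ell+1}-\varepsilon < t^* \leq \mu_\ell - \varepsilon$ rewrites exactly as $\mu_\ell \geq \avg + \varepsilon > \mu_{\ell+1}$, which is the condition defining $\Lambda_B$. The overall best response is then obtained by enumerating the finite families $\Lambda_G$ and $\Lambda_B$ and returning the candidate with smallest objective.

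The main subtlety I expect is the boundary handling: every candidate minimizer lies on the hyperplane $\lambda_k + \varepsilon = \max_a \lambda_a$, which strictly belongs to $\overline{\Alt\bmu}\setminus \Alt\bmu$. I would address this by invoking continuity of the quadratic objective together with density of $\Alt\bmu$ in $\overline{\Alt\bmu}$ (an arbitrarily small perturbation of $\lambda_\ell$ upward re-enters $\Alt\bmu$ without changing the objective in the limit), so that the infimum over $\Alt\bmu$ equals the infimum over $\overline{\Alt\bmu}$ and the closed-form minimizers of the two reduced programs are indeed the optimizers of the original problem.
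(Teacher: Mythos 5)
Your proposal is correct and follows essentially the same route as the paper: the same two-case decomposition (demoting a good arm via a single witness $\ell\in\G\bmu$, promoting a bad arm by clipping all arms above $t+\varepsilon$), the same closed-form stationary points $\avg$, and the same finite enumeration over $\Lambda_G\cup\Lambda_B$. Your explicit treatment of attainment on the closure of $\Alt\bmu$ is a point the paper passes over silently, but it does not change the argument.
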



\section{Solving the max-min problem: Optimal weights}\label{sec:max_min}

First observe that we can rewrite $T_{\varepsilon}(\bmu,.)^{-1}$ as a minimum of linear functions:

\begin{equation}
  T_{\varepsilon}(\bmu,\bomega)^{-1} = \inf_{\bd \in \Dset} \bomega\transpose\bd
  \label{eq:def_infimum}
\end{equation}
where
\[
  \Dset \triangleq \left\{\left(\frac{(\lambda_a - \mu_a)^2}{2}\right)_{a\in[K]}^{\transpose}\ \big|\ \blambda \in \Alt\bmu \right\}.
\]
Note that by using $\Dset$ instead of $\Alt{\bmu}$, the optimization function becomes simpler for the price of more complex domain (see Figure \ref{fig:sets} for an example). As a result, $T_{\varepsilon}(\bmu,.)^{-1}$ is concave and we can compute its supergradients thanks to Danskin's Theorem \cite{Danskin} which we recall in the lemma below.


\begin{figure}[h]
  \centering
  \includegraphics[width=.49\textwidth]{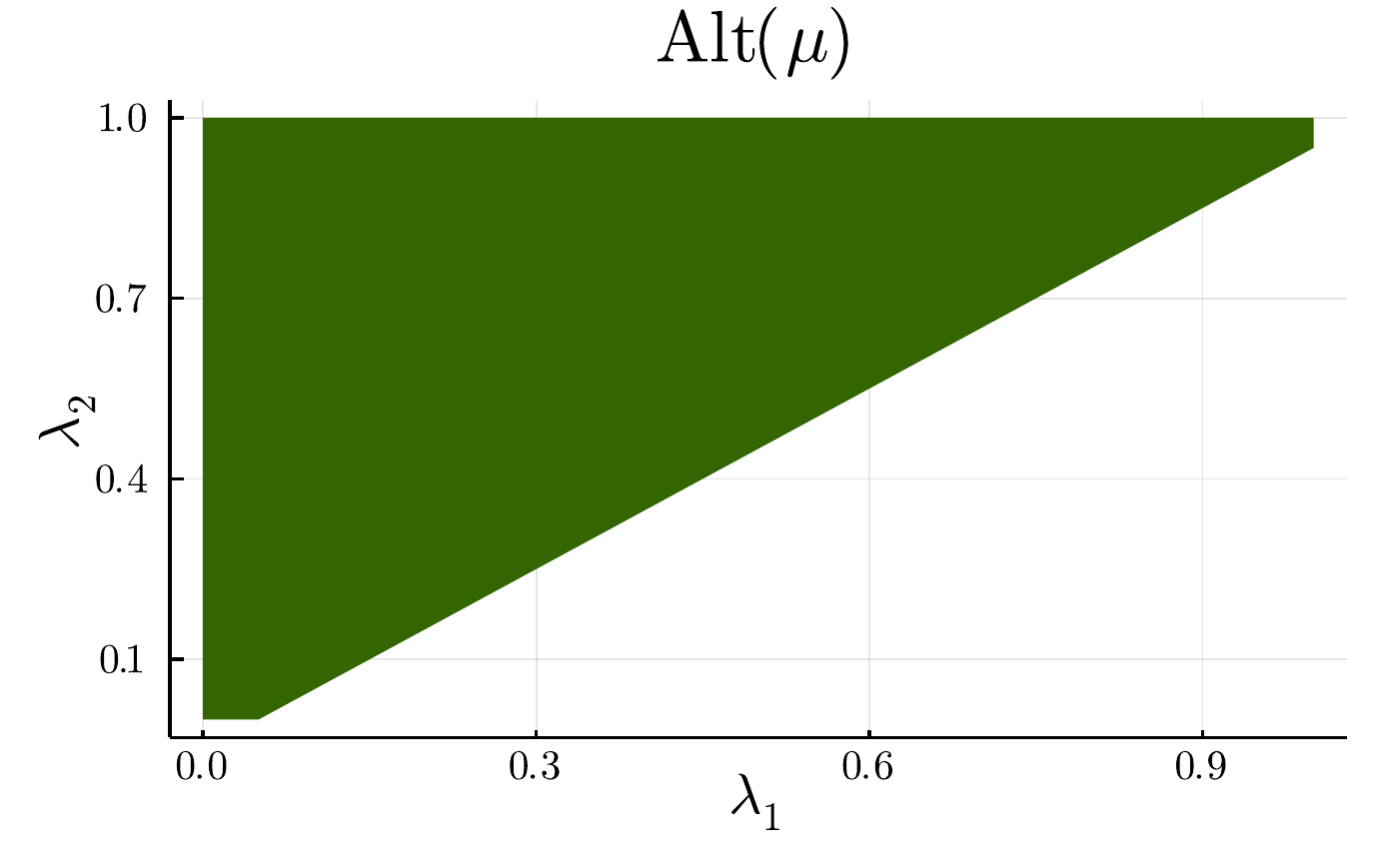}
  \includegraphics[width=.49\textwidth]{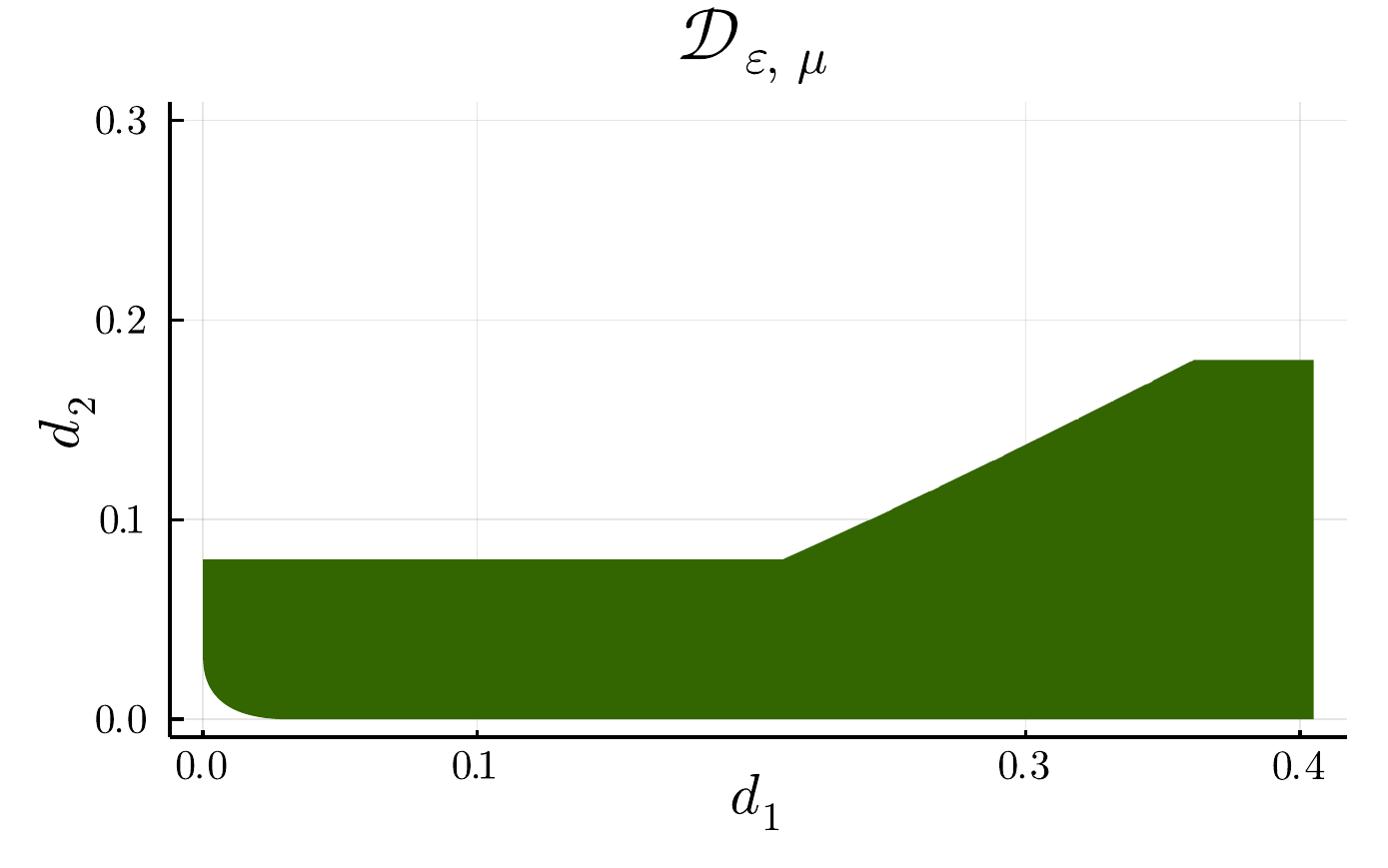}
  \caption{Comparison of $\Alt{\bmu}$ with Simple Linear Boundaries (First Figure) and $\Dset$ with Non-Linear Boundaries (Second Figure) for $\bmu = [0.9,\,0.6]$ and $\epsilon=0.05$. }
  \label{fig:sets}
\end{figure}

\begin{lemma}(Danskin's Theorem)
  Let $\blambda^*(\bomega)$ be a best response to $\bomega$ and define $\bd^*(\bomega) \triangleq \big(\frac{(\blambda^*(\bomega)_a - \mu_a)^2}{2}\big)_{a\in[K]}^{\transpose}$. Then $\bd^*(\bomega)$ is a supergradient of $T_{\varepsilon}(\bmu,.)^{-1}$ at $\bomega$.
\end{lemma}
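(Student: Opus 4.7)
The proof plan hinges entirely on the reformulation (\ref{eq:def_infimum}) given just above the lemma, which expresses $T_\varepsilon(\bmu, \cdot)^{-1}$ as the pointwise infimum of the family of linear functions $\{\bomega \mapsto \bomega\transpose \bd : \bd \in \Dset\}$. Since $\Dset$ depends only on $\bmu$ and not on $\bomega$, this representation simultaneously makes $T_\varepsilon(\bmu, \cdot)^{-1}$ concave and sets up a Danskin-type argument. I would execute it in three short steps.

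First, I would verify that $\bd^*(\bomega)$ is an element of $\Dset$ that realizes the infimum at $\bomega$. This is essentially a bookkeeping step: by definition of the best response, $\blambda^*(\bomega) \in \Alt{\bmu}$ attains the minimum in (\ref{eq:CT_inf}) — the explicit combinatorial description of Lemma \ref{lem:bestresponse} guarantees that this minimum is indeed attained — and the map $\blambda \mapsto \bigl(\tfrac{(\lambda_a - \mu_a)^2}{2}\bigr)_{a\in[K]}$ carries a minimizer in (\ref{eq:CT_inf}) to a minimizer in (\ref{eq:def_infimum}). Hence $\bd^*(\bomega) \in \Dset$ and
\[
T_\varepsilon(\bmu, \bomega)^{-1} \;=\; \inf_{\bd \in \Dset} \bomega\transpose \bd \;=\; \bomega\transpose \bd^*(\bomega).
\]

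Second, I would pick an arbitrary $\bomega' \in \simplex{K}$ and upper bound the infimum defining $T_\varepsilon(\bmu, \bomega')^{-1}$ by evaluating it at the particular (feasible) point $\bd^*(\bomega) \in \Dset$:
\[
T_\varepsilon(\bmu, \bomega')^{-1} \;=\; \inf_{\bd \in \Dset} (\bomega')\transpose \bd \;\leq\; (\bomega')\transpose \bd^*(\bomega).
\]
Adding and subtracting $\bomega\transpose \bd^*(\bomega)$ on the right and invoking the first step gives
\[
T_\varepsilon(\bmu, \bomega')^{-1} \;\leq\; T_\varepsilon(\bmu, \bomega)^{-1} + \bigl\langle \bd^*(\bomega),\, \bomega' - \bomega \bigr\rangle,
\]
which is exactly the supergradient inequality.

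I do not expect a real obstacle here: this is the textbook Danskin argument for a minimum of linear functions, and the only subtlety — the attainment of the infimum in (\ref{eq:CT_inf}) that is needed for $\bd^*(\bomega)$ to make sense — is already handled by the finite-support description $\Lambda_G \cup \Lambda_B$ in Lemma \ref{lem:bestresponse}. Uniqueness of the best response is not required; any minimizer of (\ref{eq:CT_inf}) produces a valid supergradient, so the lemma statement is unambiguous.
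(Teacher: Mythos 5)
Your argument is correct and is exactly the standard Danskin computation that the reformulation (\ref{eq:def_infimum}) is set up for; the paper itself supplies no proof of this lemma, simply citing Danskin's theorem, and your three steps (feasibility of $\bd^*(\bomega)$, attainment at $\bomega$, evaluation of the infimum at the fixed point $\bd^*(\bomega)$ for an arbitrary $\bomega'$) are the intended argument. The one point worth flagging is that the infimum in (\ref{eq:CT_inf}) is attained on the closure of $\Alt{\bmu}$ rather than on $\Alt{\bmu}$ itself (the best response places arm $k$ exactly at the threshold $\max_a\lambda_a-\varepsilon$, where it is still $\varepsilon$-good), but since the infimum of a linear function over a set equals its infimum over the closure, your inequality chain is unaffected --- and the paper glosses over the same point in Lemma \ref{lem:bestresponse}.
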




Next we prove that $T_{\varepsilon}(\bmu,.)^{-1}$ is Liptschiz.
\begin{lemma}\label{lem:lipsch}
  The function $\bomega \mapsto T_{\varepsilon}(\bmu,\bomega)^{-1}$ is $L$-Lipschitz with respect to $\norm{\,\cdot\,}_1$ for any
  \[
    L\ge\max_{a,b\in[K]}\frac{(\mu_a-\mu_b + \varepsilon)^2}{2}\;.
  \]
\end{lemma}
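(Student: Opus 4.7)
The plan is to exploit formulation (\ref{eq:def_infimum}), which represents $T_{\varepsilon}(\bmu,\bomega)^{-1}$ as a pointwise infimum of functions that are linear in $\bomega$, and then bound the slope of these linear pieces uniformly. Concretely, I would fix $\bomega, \bomega' \in \Delta_K$ and let $\blambda^\star$ denote an optimal alternative at $\bomega$, whose existence is guaranteed by Lemma \ref{lem:bestresponse}. Then
\begin{equation*}
T_\varepsilon(\bmu, \bomega)^{-1} = \sum_a \omega_a \frac{(\mu_a - \lambda^\star_a)^2}{2},
\end{equation*}
while using $\blambda^\star$ as a feasible point at $\bomega'$ yields $T_\varepsilon(\bmu, \bomega')^{-1} \leq \sum_a \omega'_a (\mu_a - \lambda^\star_a)^2/2$. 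Subtracting and applying Hölder's inequality gives
\begin{equation*}
T_\varepsilon(\bmu, \bomega')^{-1} - T_\varepsilon(\bmu, \bomega)^{-1} \leq \|\bomega' - \bomega\|_1 \cdot \max_a \frac{(\mu_a - \lambda^\star_a)^2}{2},
\end{equation*}
and the symmetric inequality obtained by swapping the roles of $\bomega$ and $\bomega'$ (using now the best response at $\bomega'$) closes the two-sided bound. It therefore suffices to control $|\mu_a - \lambda^\star_a|$ uniformly over all best responses.

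This is where Lemma \ref{lem:bestresponse} and Definition \ref{def:1} enter: any best response equals some $\blambdakl$, so each of its coordinates is either $\mu_a$, $\avg$, or $\avg + \varepsilon$. The key observation is that $\avg$ is always a convex combination with nonnegative weights of values drawn from $\{\mu_k\} \cup \{\mu_j - \varepsilon : j \in J\}$, where $J = \{\ell\}$ in Case 1 and $J = \{1,\dots,\ell\}$ in Case 2. Consequently $\avg \in [\mumin - \varepsilon, \mumax]$ and $\avg + \varepsilon \in [\mumin, \mumax + \varepsilon]$, and since the unchanged coordinates satisfy $\mu_a \in [\mumin, \mumax]$, every coordinate of $\blambda^\star$ lies in $[\mumin - \varepsilon, \mumax + \varepsilon]$. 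Combined with the range of $\mu_a$, this implies $|\mu_a - \lambda^\star_a| \leq \mumax - \mumin + \varepsilon$ for every $a$.

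Combining the two steps yields
\begin{equation*}
\max_a \frac{(\mu_a - \lambda^\star_a)^2}{2} \leq \frac{(\mumax - \mumin + \varepsilon)^2}{2} = \max_{a, b \in [K]} \frac{(\mu_a - \mu_b + \varepsilon)^2}{2} \leq L,
\end{equation*}
so that $|T_\varepsilon(\bmu, \bomega')^{-1} - T_\varepsilon(\bmu, \bomega)^{-1}| \leq L \|\bomega' - \bomega\|_1$, as required. The envelope-plus-Hölder half of the argument is routine; the only step that demands care is the uniform interval containment for the best response, which reduces to checking, case by case in Definition \ref{def:1}, that $\avg$ is always a convex combination of values lying in $[\mumin - \varepsilon, \mumax]$.
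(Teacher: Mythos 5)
Your proof is correct and follows essentially the same route as the paper's: the paper bounds $f(\bomega)-f(\bomega')\le(\bomega-\bomega')\transpose\bd^*(\bomega')\le\norm{\bomega-\bomega'}_1\norm{\bd^*(\bomega')}_\infty$ via the Danskin supergradient, which is exactly your envelope-plus-H\"older step written with the best response at the other point, and both arguments then rest on the same containment $\mumin-\varepsilon\le\avg\le\mumax$ from Definition \ref{def:1} to bound $|\mu_a-\lambda^\star_a|$ by $\mumax-\mumin+\varepsilon$. No meaningful difference.
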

\begin{proof}
  As we showed in Lemma \ref{lem:bestresponse}, the best response $\blambdastar$ to $\bomega$ is created from $\bmu$ by replacing some of the elements by $\avg$ or $\avg + \varepsilon$. We also know that $\avg$ is a weighted average of an element of $\bmu$ with one or more elements of $\bmu$ decreased by $\varepsilon$. This means that:
  \[
    \max_{a\in[K]}\mu_a \ge \avg \ge \min_{a\in[K]}\mu_a - \varepsilon
  \]
  and, as a consequence, we have:
  \[
    |\mu_i - \blambdastar_i| \le \max_{a,b\in[K]}(\mu_a-\mu_b + \varepsilon)
  \]
  for any $i\in[K]$. Let $f(\bomega) \triangleq T_{\varepsilon}(\bmu,\bomega)^{-1}$. Using the last inequality and the definition of $\bd^*(\bomega)$, we can obtain:
  \begin{align*}
    f(\bomega)-f(\bomega') & \leq (\bomega - \bomega')\transpose \bd^*(\bomega')                                \\
                           & \leq \|\bomega-\bomega'\|_1 \|\bd^*(\bomega')\|_\infty                             \\
                           & \leq \|\bomega-\bomega'\|_1\max_{a,b\in[K]}\frac{(\mu_a-\mu_b + \varepsilon)^2}{2}
  \end{align*}

  for any $\bomega,\,\bomega'\in\simplex{K}$.
\end{proof}

As a summary $T_{\varepsilon}(\bmu,.)^{-1}$ is concave, Lipschitz and we have a simple expression to compute its supergradients through the best response oracle. Therefore we have all the necessary ingredients to apply a gradient-based algorithm in order to find the optimal weights and therefore, the value of $\CT$. The algorithm of our choice is the mirror ascent algorithm which provides the following guarantees:

\begin{proposition}\cite{bubeck2015}
  Let $\bomega_1 = (\frac{1}{K},\dots,\frac{1}{K})\transpose$ and learning rate $\alpha_n = \frac{1}{L}\sqrt{\frac{2\log K}{n}}$. Then using mirror ascent algorithm to maximize a $L$-Lipschitz function $f$, with respect to $\norm{\,\cdot\,}_1$, defined on $\Delta_K$ with generalized negative entropy $\Phi(\bomega) = \suma \omega_a \log(\omega_a)$ as the mirror map enjoys the following guarantees:
  \[
    f(\bomega^*) - f\left(\frac{1}{N}\sum_{n=1}^N\bomega_n\right) \le L\sqrt{\frac{2\log K}{N}}\;.
  \]
\end{proposition}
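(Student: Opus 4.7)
The plan is to adapt the standard mirror ascent convergence proof (e.g.\ Theorem~4.2 in~\cite{bubeck2015}) to our setting. I would analyze mirror ascent on $f$, which is equivalent to mirror descent on the convex function $-f$. With the negative entropy $\Phi(\bomega) = \suma \omega_a \log \omega_a$ as mirror map, the induced Bregman divergence is the relative entropy $D_\Phi(\bomega,\bomega') = \suma \omega_a \log(\omega_a/\omega'_a)$, and the uniform starting point guarantees $D_\Phi(\bomega^*,\bomega_1) \le \log K$ for every $\bomega^*\in\simplex{K}$. By Danskin's theorem $\bd^*(\bomega_n)$ is a supergradient of $f$ at $\bomega_n$, and as in the proof of Lemma~\ref{lem:lipsch} one has $\norm{\bd^*(\bomega_n)}_\infty \le L$.

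The core step is the standard one-iteration inequality. Concavity of $f$ yields $f(\bomega^*) - f(\bomega_n) \le \langle \bd^*(\bomega_n),\,\bomega^*-\bomega_n\rangle$. Writing the mirror-ascent update as
\[
\bomega_{n+1} = \argmax_{\bomega\in\simplex{K}} \bigl\{\alpha_n\langle \bd^*(\bomega_n), \bomega\rangle - D_\Phi(\bomega,\bomega_n)\bigr\},
\]
its first-order optimality combined with the three-point identity for Bregman divergences gives
\[
\alpha_n \langle \bd^*(\bomega_n),\,\bomega^*-\bomega_n\rangle \le D_\Phi(\bomega^*,\bomega_n) - D_\Phi(\bomega^*,\bomega_{n+1}) + \alpha_n\langle \bd^*(\bomega_n),\,\bomega_{n+1}-\bomega_n\rangle - D_\Phi(\bomega_{n+1},\bomega_n).
\]
Pinsker's inequality implies that $\Phi$ is $1$-strongly convex with respect to $\norm{\,\cdot\,}_1$ on $\simplex{K}$, so $D_\Phi(\bomega_{n+1},\bomega_n) \ge \tfrac12 \norm{\bomega_{n+1}-\bomega_n}_1^2$. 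Combined with H\"older's inequality $\langle \bd^*(\bomega_n),\,\bomega_{n+1}-\bomega_n\rangle \le L\norm{\bomega_{n+1}-\bomega_n}_1$ and a Fenchel--Young step, the last two terms on the right-hand side are jointly bounded by $\alpha_n^2 L^2/2$.

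Summing the resulting inequalities for $n=1,\ldots,N$ makes the Bregman divergences telescope and leaves $\sum_n \alpha_n \langle \bd^*(\bomega_n),\,\bomega^*-\bomega_n\rangle \le \log K + \tfrac{L^2}{2}\sum_n \alpha_n^2$. Tuning the step size to balance both terms and applying Jensen's inequality via concavity of $f$ to the average iterate $\bar{\bomega}_N$ then delivers the announced rate $L\sqrt{2\log K/N}$.

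The main obstacle, though quite minor, is the bookkeeping around the time-varying step size: with the schedule $\alpha_n = \tfrac{1}{L}\sqrt{2\log K/n}$ one naturally obtains a bound on a weighted average of the iterates, and passing to a guarantee on the uniform average $\bar{\bomega}_N$ requires either a weighted-Jensen argument or switching to the constant schedule $\alpha = \tfrac{1}{L}\sqrt{2\log K/N}$, for which the constants match the stated bound exactly. Every other ingredient (Danskin, Pinsker, Bregman telescoping, Fenchel--Young) is textbook.
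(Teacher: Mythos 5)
Your proposal is correct and is essentially the same argument the paper relies on: the proposition is stated without proof as a citation of the standard mirror descent analysis (Theorem 4.2 of the cited Bubeck monograph), which is exactly the Bregman three-point/telescoping argument you reproduce. Your remark about the step size is also accurate — the stated bound for the uniform average $\frac{1}{N}\sum_{n=1}^N \bomega_n$ comes out cleanly with the fixed-horizon schedule $\alpha = \frac{1}{L}\sqrt{2\log K/N}$, and the time-varying $\alpha_n$ displayed in the proposition should be read in that light.
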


\subsubsection{Computational complexity of our algorithm.} To simplify the presentation and analysis, we chose to focus on the vanilla version of Track and Stop. However, in practice this requires solving the optimization program that appears in the lower bound at every time step, which can result in large run times. Nonetheless, we note that there are many possible adaptations of Track and Stop that reduce the computational complexity, while retaining the guarantees of asymptotic optimality in terms of the sample complexity (and with a demonstrated small performance loss experimentally). A first solution is to use Franke-Wolfe style algorithms \cite{Menard2019Gradient,Wang2021FW}, which only perform a gradient step of the optimization program at every step. Another adaptation is the Lazy Track-and-Stop \cite{Jedra2020Linear}, which updates the weights that are tracked by the algorithms every once in a while. We chose the latter solution in our implementation, where we updated the weights every $100K$ steps.

\section{Experiments}\label{sec:experiments}
We conducted three experiments to compare Track-and-Stop with state-of-the-art algorithms, mainly $(\mathrm{ST})^2$ and \textrm{FAREAST} from \cite{Mason2020}. In the first experiment, we simulate a multi-armed bandit with Gaussian rewards of means $\bmu = [1, 1, 1, 1, 0.05]$, variance one and a parameter $\varepsilon = 0.9$.
We chose this particular instance $\bmu$ because its difficulty is two-fold: First, the last arm $\mu_5$ is very close to the threshold $\max_a \mu_a - \varepsilon$. Second, the argmax is realized by more than one arm, which implies that any algorithm must estimate all the means to high precision to produce a confident guess of $\G\bmu$. Indeed, a small underestimation error of $\max_a \mu_a$ would mean wrongly classifying $\mu_5$ as a good arm. We run the three algorithms for several values of $\delta$ ranging from $\delta = 0.1$ to $\delta = 10^{-10}$, with $N=100$ Monte-Carlo simulations for each risk level. Figure \ref{fig:simulation} shows the expected stopping time along with the $10\%$ and $90\%$ quantiles (shaded area) for each algorithm. Track-and-Stop consistently outperforms
$(\mathrm{ST})^2$ and \textrm{FAREAST}, even for moderate values of $\delta$. Also note that, as we pointed out in Remark 1, the sample complexity of Track-and-Stop is within some multiplicative constant of $(\mathrm{ST})^2$.


\begin{figure}[ht!]
    \centering
    \includegraphics[width=0.5\textwidth]{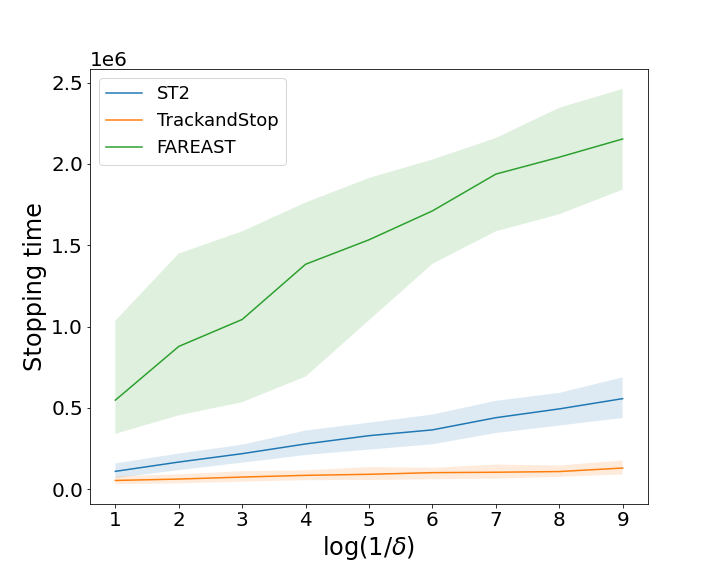}
    \hfill
    \includegraphics[width=0.49\textwidth]{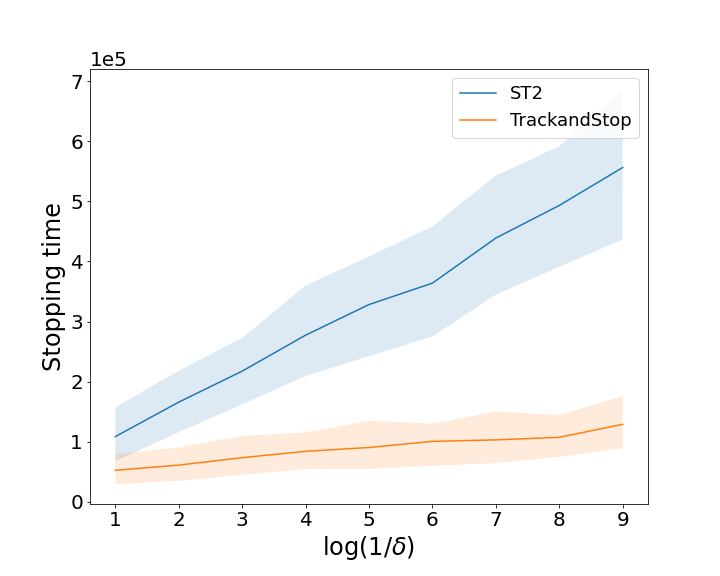}
    \caption{Expected Stopping Time on $\bmu = [1, 1, 1, 1, 0.05]$. Left: All three Algorithms. Right: Track-and-Stop vs FAREAST.}
    \label{fig:simulation}
\end{figure}
Next, we examine the performance of the algorithms w.r.t the number of arms. For any given $K$, we consider a bandit problem $\mu$ similar to the previous instance: $\forall a \in [|1, K-1|],\ \mu_a = 1$ and $\mu_K = 0.05$. We fix $\varepsilon = 0.9$ and $\delta = 0.1$ and run $N = 30$ Monte-Carlo simulations for each $K$. Figure \ref{fig:Narms} shows, in log-scale, the ratio of the sample complexities of $(\mathrm{ST})^2$ and FAREAST w.r.t to the sample complexity of Track-and-Stop. We see that Track-and-Stop performs better than $(\mathrm{ST})^2$ (resp. FAREAST) for small values of $K$. However when the number of arms grows larger than $K = 40$ (resp. $K = 60$), $(\mathrm{ST})^2$ (resp. FAREAST) have a smaller sample complexity.
\begin{figure}[ht!]
    \centering
    \includegraphics[width=0.49\textwidth]{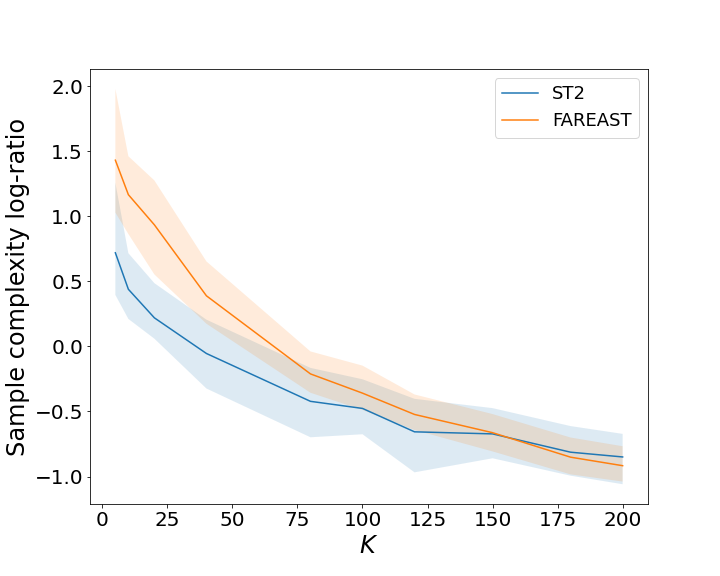}
    \includegraphics[width =.49\textwidth]{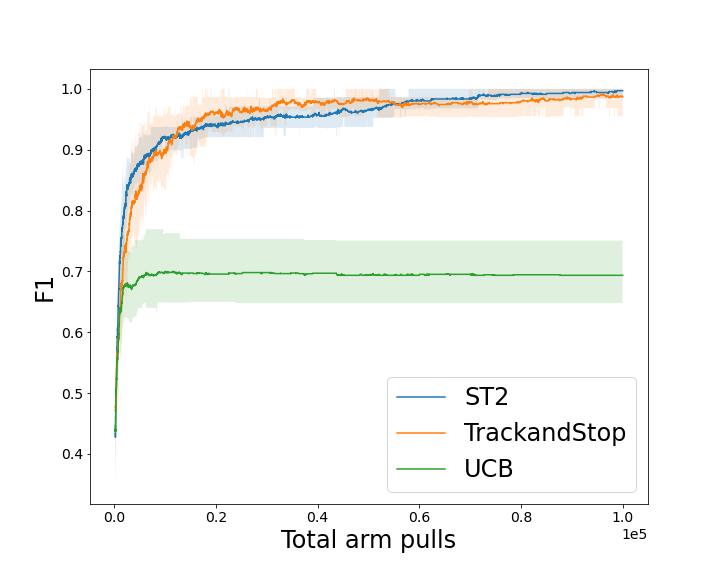}
    \caption{Left: $\displaystyle{\log_{10}\big(\E_{\mathrm{Alg}}[\tau_\delta]/\E_{\mathrm{TaS}}[\tau_\delta]\big)}$ for $\mathrm{Alg} \in \{ (\mathrm{ST})^2, \mathrm{FAREAST}\}$ and $\mathrm{TaS} =$ Track-and-Stop, $K_{\min} = 5$ arms. Right: F1 scores for Cancer Drug discovery.}
    \label{fig:Narms}
\end{figure}
\paragraph{} Finally, we rerun the Cancer Drug Discovery experiment from \cite{Mason2020}. Note that this experiment is more adapted to a \textit{fixed budget setting} where we fix a sampling budget and the algorithm stops once it has reached this limit, which is different from the \textit{fixed confidence} setting that our algorithm was designed for. The goal is to find, among a list of $189$ chemical compounds, potential inhibitors to \textbf{ACRVL1}, a Kinaze that researchers \cite{cancer19} have linked to several forms of cancer. We use the same dataset as \cite{Mason2020}, where for each compound a percent control\footnote{percent control is a metric expressing the efficiency of the compound as an inhibitor against the target Kinaze.} is reported. We fix a budget of samples $N = 10^5$ and try to find all the $\varepsilon$-good compounds in the multiplicative case with $\varepsilon = 0.8$. For each algorithm, we compute the F1-score\footnote{F1 score is the harmonic mean of precision (the proportion of arms in $\widehat{G}$ that are actually good) and recall (the proportion of arms in $\G\bmu$ that were correctly returned in $\widehat{G}$).} of its current estimate $\widehat{G}_\varepsilon = \{i\ :\ \widehat{\mu_i} \geq (1-\varepsilon)\max_a \widehat{\mu}_a \}$ after every iteration. The F1-score in this fixed-budget setting reflects how good is the sampling scheme of an algorithm, independently of its stopping condition. In Figure \ref{fig:Narms} we plot the average F1-score along with the $10\%$ and $90\%$ quantiles (shaded area). We see that $(\mathrm{ST})^2$ and Track-and-Stop have comparable performance and that both outperform UCB's sampling scheme.



\section{Conclusion}
We shed a new light on the sample complexity of finding all the $\varepsilon$-good arms in a multi-armed bandit with Gaussian rewards. We derived two lower bounds, identifying the characteristic time that reflects the true hardness of the problem in the asymptotic regime. Moreover, we proved a second bound highlighting an additional cost that is linear in the number of arms and can be arbitrarily larger than the first bound for moderate values of the risk. Then, capitalizing on an algorithm solving the optimization program that defines the characteristic time, we proposed an efficient Track-and-Stop strategy whose sample complexity matches the lower bound for small values of the risk level. Finally, we showed through numerical simulations that our algorithm outperforms state-of-the-art methods for bandits with small to moderate number of arms. Several directions are worth investigating in the future. Notably, we observe that while Track-and-Stop performs better in the fixed-$K$-small-$\delta$ regime, the elimination based algorithms (ST)$^2$ and FAREAST become more efficient in the large-$K$-fixed-$\delta$ regime. It would be interesting to understand the underlying tradeoff between the number of arms and confidence parameter. This will help design pure exploration strategies having best of both worlds guarantees.

\bibliographystyle{alpha}
\bibliography{library}

\newpage
\appendix

\appendix
\onecolumn
\setcounter{theorem}{6}

\section{Multiplicative Case}\label{sec:multiplicative}
Most of the  previous analysis of the additive case can be directly applied to the multiplicative case where the set of the best arms is defined as $\G{\bmu} \triangleq \{ a\in[K]:\mu_a \geq (1 - \varepsilon) \max_{i} \mu_i \}$. The only missing pieces are the best response oracle and an upper bound on the Lipschitz constant for a mirror ascent algorithm. Solving these two problems gives us all the necessary tools to produce the multiplicative version of the algorithm presented in \ref{main_algo}.
\subsection{Best Response Oracle - Multiplicative Case}
Similarly, as in the additive case, split the reasoning into two parts: making a good arm bad or making a bad arm good. This provides the following variation of Lemma 3
\begin{lemma}\label{lem:mul_response}
  Define $\blambdakl$ as in Definition \ref{def:1} for
  \[
    \avg \triangleq \frac{(1-\epsilon)^2\omega_k\mu_k + (1-\epsilon)\omega_\ell\mu_\ell}{(1-\epsilon)^2\omega_k + \omega_\ell}
  \]

  in the case of $k\in\G{\bmu}$ and

  \[
    \avg \triangleq \frac{(1-\epsilon)^2\omega_k\mu_k + \suml(1-\epsilon)\omega_a\mu_a}{(1-\epsilon)^2\omega_k + \suml\omega_a}
  \]

  in the case of $k\in\B{\bmu}$. Then the best response $\blambdastar$ to a given $\bomega$ is defined as:
  \[
    \blambdastar = \argmin_{\blambda\in \Lambda_G\cup\Lambda_B} \suma \omega_a\frac{(\mu_a - \lambda_a)^2}{2}.
  \]
  where
  \[
    \Lambda_G = \{\blambdakl : k\in\G{\bmu}, \ell\in \G{\bmu}/\{k\}\}
  \]
  and
  \begin{align*}
    \Lambda_B = \{&\blambdakl : k  \in\B{\bmu} , \ell \in [|1,k-1|]\\            &\mathrm{s.t.}\ \mu_\ell \ge \avg/(1-\epsilon) > \mu_{\ell+1}\}.
  \end{align*}
\end{lemma}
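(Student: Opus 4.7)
The plan is to mirror step by step the proof of Lemma~\ref{lem:bestresponse}, replacing the additive threshold $\mu_1 - \varepsilon$ by the multiplicative one $(1-\varepsilon)\mu_1$. Any $\blambda \in \Alt\bmu$ must either turn a currently $\varepsilon$-good arm into a bad one or turn a currently $\varepsilon$-bad arm into a good one, so I would first split the infimum in (\ref{eq:CT_inf}) into these two sub-problems and solve each separately.

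For the case where we make a good arm $k \in \G\bmu$ bad, the same argument as in the additive proof shows that it is optimal to keep $\lambda_a = \mu_a$ for every arm outside $\{k, \ell\}$, where $\ell$ plays the role of the new maximum. The constraint that $k$ is no longer $\varepsilon$-optimal should be saturated, i.e.\ $\lambda_k = (1-\varepsilon)\lambda_\ell$; writing $\lambda_k = t$ and $\lambda_\ell = t/(1-\varepsilon)$, the objective reduces to
\[
  \frac{\omega_k(\mu_k - t)^2}{2} + \frac{\omega_\ell(\mu_\ell - t/(1-\varepsilon))^2}{2}.
\]
A first-order condition in $t$ gives the linear equation $[\omega_k(1-\varepsilon)^2 + \omega_\ell]\,t = (1-\varepsilon)^2\omega_k\mu_k + (1-\varepsilon)\omega_\ell\mu_\ell$, whose solution is exactly the claimed expression for $\avg$. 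I would also note that $\ell$ can be restricted to $\G\bmu$: choosing $\ell \in \B\bmu$ would instead promote a bad arm, which falls into the second case.

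For the case where we make a bad arm $k \in \B\bmu$ good, I would set $\lambda_k = t$ and observe that $k$ becomes $\varepsilon$-good in $\blambda$ iff $\lambda_a \leq t/(1-\varepsilon)$ for every arm. The cheapest way to enforce this is to leave $\lambda_a = \mu_a$ on arms with $\mu_a \leq t/(1-\varepsilon)$ and to push $\lambda_a$ down to exactly $t/(1-\varepsilon)$ on arms with $\mu_a > t/(1-\varepsilon)$. The resulting objective $f(t)$ is a sum of convex piecewise-quadratic functions, hence convex with a unique minimizer $t^*$, and checking the signs of $f'$ at $\mu_k$ and at $(1-\varepsilon)\mu_1$ localizes $t^*$ in $[\mu_k, (1-\varepsilon)\mu_1]$. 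Setting $\ell = |\{a : \mu_a > t^*/(1-\varepsilon)\}|$, the stationarity condition rearranges, after the same algebra as above, into the announced formula for $\avg$ in the bad-arm case.

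Finally, since $t^*$ is unique, the index $\ell$ is uniquely characterized by $\mu_\ell \geq \avg/(1-\varepsilon) > \mu_{\ell+1}$ with $\ell \in \{1, \dots, k-1\}$, which yields the stated definition of $\Lambda_B$. Taking the minimum of the objective over the finite set $\Lambda_G \cup \Lambda_B$ then produces $\blambdastar$. The main obstacle I anticipate is the bookkeeping for Case~2: one must verify that the piecewise derivative $f'$ is well-defined (and changes sign) at the relevant breakpoints and that the closed-form expression for $\avg$ is consistent with the interval in which $t^*$ was localized. The argument excluding perturbations of arms outside the modified indices carries over verbatim from the additive proof, since any such perturbation strictly inflates the quadratic cost without affecting the set $\G\blambda$.
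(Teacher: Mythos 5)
Your proposal is correct and follows essentially the same route as the paper's proof: split into the two cases (demoting a good arm, promoting a bad arm), saturate the multiplicative constraint by setting $\lambda_k=t$ and the relevant arms to $t/(1-\varepsilon)$, and solve the first-order condition, which yields exactly the stated weighted averages; the paper simply compresses this into ``using calculus'' and a reference to the additive argument. The extra details you supply (convexity and localization of $t^*$, uniqueness of $\ell$) are the right ones and match what the paper does in the additive case.
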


\begin{proof}\phantom{x}\\
  \textbf{Case 1: Making one of the $\epsilon$-optimal arms bad.}
  Selecting a good arm $k\in\G{\bmu}$ and any other arm $\ell\in\G{\bmu}$ we can set the expectation of arm $k$ to $t$ and the expectation of arm $\ell$ to $t/(1-\epsilon)$. Using calculus, we can find the optimal value
  \[
    t = \frac{(1-\epsilon)^2\omega_k\mu_k + (1-\epsilon)\omega_\ell\mu_\ell}{(1-\epsilon)^2\omega_k + \omega_\ell}.
  \]

  \textbf{Case 2: Making one of the sub-optimal arms good.}
  Selecting a bad arm $k\in\B{\bmu}$ and the first $\ell$ arms we can set the expectation of arm $k$ to $t$ and the expectation of the first $\ell$ arms to $t/(1-\epsilon)$. Note that $\ell$ is a unique integer such that $\mu_\ell\ge t/(1-\epsilon) > \mu_{\ell+1}$ thanks to the same argument as in the additive case. Using calculus, we can find the optimal value
  \[
    t = \frac{(1-\epsilon)^2\omega_k\mu_k + \suml(1-\epsilon)\omega_a\mu_a}{(1-\epsilon)^2\omega_k + \suml\omega_a}.
  \]

\end{proof}

\subsection{Upper bound on the Lipschitz constant}
The final ingredient is a constant $L$ for a mirror ascent algorithm, characterized in the following lemma.
\begin{lemma}
  The function $\bomega \mapsto T_{\varepsilon}(\bmu,\bomega)^{-1}$ is $L$-Lipschitz with respect to $\norm{\,\cdot\,}_1$ for any
  \[
    L\ge\max_{a,b\in[K]}\frac{(\mu_a-\mu_b(1-\epsilon))^2}{2(1-\epsilon)^2}\;.
  \]
\end{lemma}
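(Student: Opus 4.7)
The plan is to imitate step-by-step the proof of Lemma \ref{lem:lipsch} from the additive case. By Danskin's theorem, $\bd^*(\bomega')$ is a super-gradient of the concave function $f\triangleq T_{\varepsilon}(\bmu,\cdot)^{-1}$ at $\bomega'$, so for any $\bomega,\bomega'\in\simplex{K}$,
\[
f(\bomega)-f(\bomega')\;\le\;(\bomega-\bomega')^\top \bd^*(\bomega')\;\le\;\|\bomega-\bomega'\|_1\,\|\bd^*(\bomega')\|_\infty.
\]
The entire task therefore reduces to bounding $\|\bd^*(\bomega')\|_\infty=\max_i (\blambdastar_i-\mu_i)^2/2$ by the quantity appearing in the lemma.

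By Lemma \ref{lem:mul_response}, each coordinate of $\blambdastar$ equals either $\mu_i$, $\avg$, or $\avg/(1-\varepsilon)$. The next step is to rewrite the numerators in the multiplicative formulas for $\avg$ in both cases (\,$k\in\G{\bmu}$ and $k\in\B{\bmu}$\,) as
$(1-\varepsilon)^2\omega_k\cdot\mu_k+\sum_a \omega_a\cdot(1-\varepsilon)\mu_a$,
so as to recognize $\avg$ as a \emph{convex combination} of $\mu_k$ and values $(1-\varepsilon)\mu_a$ (with $a=\ell$ in case 1, $a\in[\ell]$ in case 2). Invoking the standing non-negativity of means in the multiplicative setting (without which the threshold $(1-\varepsilon)\max_i\mu_i$ is meaningless), $(1-\varepsilon)\mu_a\le\mu_a$, hence
\[
\min_a(1-\varepsilon)\mu_a\;\le\;\avg\;\le\;\max_a\mu_a,\qquad \min_a\mu_a\;\le\;\avg/(1-\varepsilon)\;\le\;\max_a\mu_a/(1-\varepsilon).
\]

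Each $\mu_i\ge 0$ lies inside both intervals, so $|\mu_i-\blambdastar_i|$ is upper-bounded by the length of the wider (second) one:
\[
|\mu_i-\blambdastar_i|\;\le\;\frac{\max_a\mu_a}{1-\varepsilon}-\min_b\mu_b\;=\;\frac{\max_{a,b\in[K]}\bigl(\mu_a-\mu_b(1-\varepsilon)\bigr)}{1-\varepsilon}.
\]
Squaring, dividing by two and taking the maximum over $i\in[K]$ gives exactly the stated upper bound on $\|\bd^*(\bomega')\|_\infty$, which combined with the Danskin--Hölder inequality at the start yields the $L$-Lipschitz property and closes the proof.

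The main obstacle, though modest, is the second step: one must unfold the slightly opaque multiplicative formulas of Lemma \ref{lem:mul_response} to reveal the convex-combination structure, and then realize that the binding case (the one producing the tightest interval containing the $\blambdastar_i$'s) is the $\avg/(1-\varepsilon)$ coordinates, not the $\avg$ ones. It is exactly this division by $1-\varepsilon$ that explains the extra factor $1/(1-\varepsilon)^2$ in $L$ compared with the additive case, and so reflects the cost of using a multiplicative rather than additive notion of $\varepsilon$-optimality.
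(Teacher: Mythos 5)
Your proof is correct and takes essentially the same route as the paper's: exhibit $\avg$ as a convex combination so that $(1-\epsilon)\mumin\le\avg\le\mumax$, deduce $|\mu_i-\blambdastar_i|\le \mumax/(1-\epsilon)-\mumin$, and conclude via the Danskin--H\"older supergradient bound inherited from the additive case. You are merely more explicit than the paper about that last step and about the (genuinely needed) nonnegativity of the means in the multiplicative setting.
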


\begin{proof}
  Lemma \ref{lem:mul_response} shows that the expectation of the selected arm $k$ should be in equal to
  \[
    t = \frac{(1-\epsilon)^2\omega_k\mu_k + \sum(1-\epsilon)\omega_a\mu_a}{(1-\epsilon)^2\omega_k + \sum\omega_a}
  \]
  where the sum is taken either over a single arm $\ell$ or over the first $\ell$ arms. In both cases we can show
  \[
    (1-\epsilon)\mu_\mathrm{min} \le \frac{(1-\epsilon)^2\omega_k\mu_k + \sum(1-\epsilon)\omega_a\mu_a}{(1-\epsilon)^2\omega_k + \sum\omega_a}
  \]
  using $(1-\epsilon) < 1$, $\mu_\mathrm{min} \le \mu_a$ for any $a\in[K]$ and
  \[
    \frac{(1-\epsilon)^2\omega_k\mu_k + \sum(1-\epsilon)\omega_a\mu_a}{(1-\epsilon)^2\omega_k + \sum\omega_a}\le \mu_\mathrm{max}
  \]
  using $(1-\epsilon) < 1$ and $\mu_\mathrm{max} \ge \mu_a$ for any $a\in[K]$. Using this inequality we can show that $(\mu_a - \lambda_a)^2$ can be upper-bounded by
  \[
    \left(\frac{\mu_\mathrm{max}}{1-\epsilon} - \mu_\mathrm{min}\right)^2
  \]
  for any best response $\blambda$. Applying this bound to the optimization problem concludes the proof of the lemma.
\end{proof}

\section{Proof of Proposition 1}
Let $\delta\in(0,1)$ be a confidence parameter and $\bmu$ be a bandit problem. Considering a $\delta$-PAC strategy, we can apply transportation Lemma 1 of \cite{kaufmann2015} to obtain
\[
  \suma \frac{(\mu_a-\lambda_a)^2}{2}\E[N_a] \ge \kl(\delta,\,1-\delta)
\]
for $N_a \triangleq N_a(\tau_\delta)$. This holds for any alternative problem $\blambda\in\Alt{\bmu}$. Therefore, we have
\begin{align*}
  \kl(\delta,\,1-\delta) & \le \inf_{\blambda\in\Alt{\bmu}} \suma \E[\tau_\delta]\frac{(\mu_a-\lambda_a)^2}{2}\frac{\E[N_a]}{\E[\tau_\delta]}       \\
                         & \le \E[\tau_\delta]\sup_{\bomega\in\simplex{K}}\inf_{\blambda\in\Alt{\bmu}} \suma\omega_a \frac{(\mu_a-\lambda_a)^2}{2}.
\end{align*}
Fraction $\E[N_a] / \E[\tau_\delta]$ represents empirical proportion of arm draws which enables us to upper-bound it by a supremum over all possible arm allocations. We conclude by noting that $ \kl(\delta,\,1-\delta) \geq \log(1/2.4\delta)$.

\section{Proof of Theorem \ref{thm:simulator_LB}}\label{sec:proof_simulator}
We restrict our attention to symmetric algorithms. An algorithm $\cA$ is said to be symmetric if it satisfies for any permutation $\pi$, any integer $n\geq 1$ and any sequence of actions $A_1,\ldots, A_n$,
\begin{align*}
    \P_{\cA, \nu}\big((a_1,\ldots, a_n) = (A_1, \dots, A_n) \big) = \P_{\cA, \pi(\nu)}\big((a_1,\ldots, a_n) = (\pi(A_1), \dots, \pi(A_n)) \big).
\end{align*}
In words, $\cA$ is symmetric if it is indifferent to the order of the arms and acts only based on the underlying distributions. \cite{simchowitz17a} showed that for any algorithm $\cA$, one can easily build a symmetrized version $\cA_{sym}$ such that for any bandit instance $\nu$, $\E_{\pi \sim \mathbf{S}_K}\E_{\cA, \pi(\nu)}[\tau_\delta] = \E_{\cA_{sym}, \nu}[\tau_\delta]$. Therefore, we only need to prove the lower bound for symmetric algorithms.\\
Throughout the proof it will be useful to represent bandit instances using the {\it random table model} \cite{LS19}: $\nu$ can be defined as a collection of random variables $(X_{a,t})_{a\in [K], t\geq 1}$ where $X_{a,t}$ represents the reward received when playing arm $a$ for the $n$-th time. Therefore it is enough to specify the law of each $X_{a,t}$ to define $\nu$.\\
The lemma below shows that no arm can be played much less than the arms in $G_{\beta_\epsilon}(\bmu)$ and enables us to avoid performing algorithmic reductions of the All-$\epsilon$ problem to Best Arm Identification or $\beta$-isolated tests as was done in \cite{Mason2020}.
\begin{lemma}
For all arms $b \in [K]$ and all integers $n\geq 1$,
\begin{align*}
    \frac{1}{|G_{\beta_\epsilon}(\bmu)|}\sum_{a\in G_{\beta_\epsilon}(\bmu)} \P_{\nu}\left( N_a(\tau) > n \right) -  (\mu_1 - \mu_b)\sqrt{n/2} \leq 3 \P_\nu(N_b(\tau) > n).
\end{align*}
\label{lemma:simulator_1}
\end{lemma}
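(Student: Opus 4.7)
The plan is a simulator-style coupling argument in the spirit of \cite{simchowitz17a}: since one may restrict to symmetric algorithms, for each $a \in G_{\beta_\epsilon}(\bmu)$ I would introduce the transposition $\pi_a = (a\,b)$ (with $\pi_a = \mathrm{id}$ when $a = b$). Symmetry then yields $\P_\nu(N_a(\tau) > n) = \P_{\pi_a(\nu)}(N_b(\tau) > n)$, so that the left-hand side of the lemma rewrites as $|G_{\beta_\epsilon}(\bmu)|^{-1}\sum_a \P_{\pi_a(\nu)}(N_b(\tau) > n)$. The task reduces to bounding each $\P_{\pi_a(\nu)}(N_b(\tau) > n)$ in terms of $\P_\nu(N_b(\tau) > n)$ plus a transport penalty.

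The technical core is to control the cost of passing from $\pi_a(\nu)$ back to $\nu$ even though the random numbers of samples of arms $a$ and $b$ are, a priori, unbounded. I would introduce the hitting times $\sigma_c := \inf\{t: N_c(t) > n\}$ and the stopping time $T_a := \tau \wedge \sigma_a \wedge \sigma_b$; by construction $N_a(T_a), N_b(T_a) \leq n$. Since $\pi_a(\nu)$ differs from $\nu$ only by swapping the Gaussian means of $a$ and $b$, the chain rule gives $\KL\big(\P_\nu|_{\mathcal{F}_{T_a}}, \P_{\pi_a(\nu)}|_{\mathcal{F}_{T_a}}\big) \leq n(\mu_a-\mu_b)^2$, and Pinsker's inequality then yields $|\P_\nu(E) - \P_{\pi_a(\nu)}(E)| \leq |\mu_a-\mu_b|\sqrt{n/2}$ for every $E \in \mathcal{F}_{T_a}$. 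I would now decompose the target event as $\{N_b(\tau) > n\} = \{\sigma_b \leq \tau\wedge\sigma_a\} \sqcup \{\sigma_a < \sigma_b \leq \tau\}$. The first piece is $\mathcal{F}_{T_a}$-measurable (its occurrence can be read off at time $T_a$), so the Pinsker bound applies directly; the second piece is contained in $\{\sigma_a \leq \tau\} = \{N_a(\tau) > n\}$, and a second invocation of symmetry gives $\P_{\pi_a(\nu)}(N_a(\tau)>n) = \P_\nu(N_b(\tau)>n)$. Adding the two contributions produces
\[
  \P_{\pi_a(\nu)}(N_b(\tau) > n) \leq 2\,\P_\nu(N_b(\tau) > n) + |\mu_a-\mu_b|\sqrt{n/2}.
\]

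Averaging this inequality over $a \in G_{\beta_\epsilon}(\bmu)$ and using $|\mu_a - \mu_b| \leq \mu_1 - \mu_b$ (valid whenever $\mu_a \geq \mu_b$; the case $b \in G_{\beta_\epsilon}(\bmu)$ is absorbed into the multiplicative constant via the trivial summand $a = b$, which carries no transport cost) delivers the claim, with the constant loosened from $2$ to $3$ to absorb the boundary contributions. The main obstacle is the choice of $T_a$: stopping only at $\sigma_a$ or only at $\sigma_b$ lets the KL divergence blow up through the other arm, while stopping earlier destroys the $\mathcal{F}_{T_a}$-measurability of $\{\sigma_b \leq \tau\wedge\sigma_a\}$; the joint truncation $\tau \wedge \sigma_a \wedge \sigma_b$ is the unique balance. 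Its complementary piece --- the event that both counters have exceeded $n$ --- is invisible to any change of measure on $\mathcal{F}_{T_a}$ and must be handled by the secondary symmetry argument, which is precisely what produces the extra factor of $\P_\nu(N_b(\tau) > n)$ on the right-hand side of the lemma.
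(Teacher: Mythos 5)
Your argument is correct and delivers the lemma (in fact with the slightly better constant $2$ in place of $3$), but it takes a genuinely different route from the paper's. The paper never touches a stopped $\sigma$-algebra: it stays in the random-table model and interposes two auxiliary \emph{non-stationary} bridge instances $\widetilde{\nu}$ and $\widetilde{\pi}$, which agree with $\nu$ and $\pi(\nu)$ respectively on the first $n$ rewards of each arm and switch arm $b$ (resp.\ arm $a$) to the mean $\mu_a$ thereafter; the bounds $\TV{\P_{\widetilde{\nu}},\P_\nu}\le\P_\nu(N_b(\tau)>n)$ and $\TV{\P_{\pi(\nu)},\P_{\widetilde{\pi}}}\le\P_{\pi(\nu)}(N_a(\tau)>n)$ follow from conditioning on $\{N_b(\tau)\le n\}$ and joint convexity of TV, and the only KL computation is between the two bridges, whose reward tables differ in exactly $2n$ fixed Gaussian entries --- so Pinsker is applied to a product measure and no stopping-time chain rule is ever invoked. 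You instead compare $\nu$ and $\pi_a(\nu)$ directly on $\mathcal{F}_{T_a}$ with $T_a=\tau\wedge\sigma_a\wedge\sigma_b$, use the stopped-KL identity plus Pinsker, and recover the part of $\{N_b(\tau)>n\}$ that is invisible at time $T_a$ by a second appeal to symmetry; your identification of $\{\sigma_b\le\tau\wedge\sigma_a\}$ with the $\mathcal{F}_{T_a}$-measurable event $\{N_b(T_a)>n\}$ is exactly the point, and it is what the paper's bridge construction achieves implicitly. What each buys: your route is the more standard information-theoretic argument and gives a tighter constant; the paper's avoids all measurability and Wald-type technicalities by construction. Two small wrinkles in yours, neither fatal: with $\sigma_c=\inf\{t:N_c(t)>n\}$ the stopped counts can reach $n+1$, so the KL bound is really $(n+1)(\mu_a-\mu_b)^2$ (harmless, or fixable by stopping at $N_c(t)\ge n$); and the step $|\mu_a-\mu_b|\le\mu_1-\mu_b$ can indeed fail when $\mu_b>\mu_a$ with both arms near the top --- but the paper's own proof makes the identical unjustified step $(\mu_a-\mu_b)^2\le(\mu_1-\mu_b)^2$, and in both proofs the bound $|\mu_a-\mu_b|\le\mu_1-\mu_b+\beta_\epsilon$, which always holds, suffices for the use of the lemma in Theorem \ref{thm:simulator_LB}.
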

\begin{proof}
Fix $a\in G_{\beta_\epsilon}(\bmu)\setminus\{b\}$ and $n\geq 1$. Let $\pi$ be the permutation that swaps arms $a$ and $b$, i.e. $\pi(a) = b, \pi(b) = a$ and $\pi(k) = k$ for $k \in [K]\setminus\{a,b\}$. We define the non-stationary bandit instances $\widetilde{\nu}$ and $\widetilde{\pi}$ such that
\begin{align*}
    \widetilde{\nu} : 
\begin{tabular}{c||c|c|}
\text{Arm } & First $n$ rewards & Next rewards \\ 
\hline \hline
a  & $\sim\cN(\mu_a,1)$ & $\sim\cN(\mu_a,1)$  \\
\hline
b & $\sim\cN(\mu_b,1)$ & $\sim\cN(\mu_a,1)$ \\
\hline
$k\in [K]\setminus\{a,b\}$ & $\sim\cN(\mu_k,1)$   & $\sim\cN(\mu_k,1)$\\
\hline
\end{tabular} 
\quad \text{and} \quad
\end{align*}
\begin{align*}
\widetilde{\pi} : 
\begin{tabular}{c||c|c|}
\text{Arm } & First $n$ rewards & Next rewards \\ 
\hline \hline
a & $\sim\cN(\mu_b,1)$ & $\sim\cN(\mu_a,1)$  \\
\hline
b & $\sim\cN(\mu_a,1)$ & $\sim\cN(\mu_a,1)$ \\
\hline
$k\in [K]\setminus\{a,b\}$ & $\sim\cN(\mu_k,1)$   & $\sim\cN(\mu_k,1)$\\
\hline
\end{tabular}
\end{align*}

$\widetilde{\nu}$ and $\widetilde{\pi}$ will only serve as "bridges" in our change-of-measure argument. In particular, we do not require that the algorithm returns a good answer on any of them. Let $\P_\lambda$ denote the law of all relevant random variables (rewards, actions played, stopping times..) when running algorithm $\cA$ on instance $\lambda$ and define the event $E = (N_b(\tau) \leq n)$. Observe that $\P_{\nu}\left( E \cap . \right) = \P_{\widetilde{\nu}}\left( E \cap . \right)$, since under $E$ algorithm $\cA$ observes the same distribution of rewards. Thus using Bayes' Theorem one can write
\begin{align}
 \TV{\P_{\widetilde{\nu}}, \P_\nu} &= \mathrm{TV}\big(\P_{\widetilde{\nu}}(E)\times\P_{\widetilde{\nu}}(.|E) + \P_{\widetilde{\nu}}(E^c)\times \P_{\widetilde{\nu}}(.|E^c),\ \P_\nu(E)\times\P_\nu(.|E)+\P_\nu(E^c)\times\P_\nu(.|E^c)\big)\nonumber\\
 &\overset{(a)}{=}  \mathrm{TV}\big(\P_\nu(E)\times\P_{\widetilde{\nu}}(.|E) + \P_\nu(E^c) \times\P_{\widetilde{\nu}}(.|E^c),\ \P_\nu(E)\times\P_\nu(.|E)+\P_\nu(E^c)\times\P_\nu(.|E^c)\big)\nonumber\\
 &\overset{(b)}{\leq}  \P_\nu(E)\mathrm{TV}\big(\P_{\widetilde{\nu}}(.|E),\ \P_\nu(.|E)\big) + \P_\nu(E^c)\mathrm{TV}\big(\P_{\widetilde{\nu}}(.|E^c),\ \P_\nu(.|E^c)\big)\nonumber\\
 &\overset{(c)}{\leq} \P_\nu(E^c) = \P_{\nu}\left( N_b(\tau) > n \right),
\label{eq:TV_decomposition_1}
\end{align}
where (a) is because $\P_\nu(E) = \P_{\widetilde{\nu}}(E)$ hence $\P_\nu(E^c) = \P_{\widetilde{\nu}}(E^c)$ also, (b) is by the joint convexity of the TV distance and (c) is because $\P_{\nu}\left( E \cap . \right) = \P_{\widetilde{\nu}}\left( E \cap . \right)$ implies that $\P_{\nu}\left(.| E \right) = \P_{\widetilde{\nu}}\left(.| E \right)$. Similarly, by considering event $E' = (N_b(\tau)>n)$ we have
\begin{align}
    \TV{\P_{\pi(\nu)}, \P_{\widetilde{\pi}}} \leq \P_{\pi(\nu)}\left( N_a(\tau) > n \right).
\label{eq:TV_decomposition_2}
\end{align}
Using the above, one can write 
\begin{align*}
    \P_{\nu}\left( N_a(\tau) > n \right) - \P_{\pi(\nu)}\left( N_a(\tau) > n \right)  &\leq \TV{\P_\nu, \P_{\pi(\nu)}}\\
    &\leq \TV{\P_{\pi(\nu)}, \P_{\widetilde{\pi}}} + \TV{\P_{\widetilde{\pi}}, \P_{\widetilde{\nu}}} + \TV{\P_{\widetilde{\nu}}, \P_\nu}\\
    &\overset{(a)}{\leq} \P_{\pi(\nu)}(N_a(\tau) > n) + \P_\nu(N_b(\tau) > n) + \sqrt{\frac{\KL(\P_{\widetilde{\pi}}, \P_{\widetilde{\nu}})}{2}}\\
    &\overset{(b)}{\leq} 2\P_\nu(N_b(\tau) > n) + \sqrt{\frac{\KL(\P_{\widetilde{\pi}}, \P_{\widetilde{\nu}})}{2}},\\
\end{align*}
where (a) comes from combining (\ref{eq:TV_decomposition_1}) and (\ref{eq:TV_decomposition_2}) and using Pinsker's inequality and (b) is because the symmetry of the algorithm implies that $\P_{\pi(\nu)}(N_a(\tau) > n) = \P_\nu(N_b(\tau) > n)$. Now observe that $\KL(\P_{\widetilde{\pi}}, \P_{\widetilde{\nu}}) = n \big(\KL(\cN(\mu_a,1), \cN(\mu_b,1))+ \KL(\cN(\mu_b,1), \cN(\mu_a,1))\big) = n (\mu_a - \mu_b)^2 \leq n (\mu_1 - \mu_b)^2$. So that the inequality above simplifies to
\begin{align*}
   \P_{\nu}\left( N_a(\tau) > n \right) - (\mu_1 - \mu_b)\sqrt{n/2} \leq 3\P_\nu(N_b(\tau) > n).
\end{align*}
Note that the inequality above holds trivially when $a =b$. Now for a fixed $b$ by summing the inequality over all arms $a\in G_{\beta_\epsilon}(\bmu)$ we get
\begin{align*}
    \sum_{a\in G_{\beta_\epsilon}(\bmu)} \P_{\nu}\left( N_a(\tau) > n \right) - |G_{\beta_\epsilon}(\bmu)| (\mu_1 - \mu_b)\sqrt{n/2} \leq 3|G_{\beta_\epsilon}(\bmu)| \P_\nu(N_b(\tau) > n).
\end{align*}
Hence the statement of the lemma.
\end{proof}
Now the next lemma will show that arms in $G_{\beta_\epsilon}(\bmu)$ must be played $\Omega(1/\beta_\epsilon^2)$ times because underestimating their means by $\beta_\epsilon$ may cause to declare a bad arm as $\epsilon$-optimal. 
\begin{lemma}
For all integers $n\geq1$,
\begin{align*}
    1-2\delta - |G_{\beta_\epsilon}(\bmu)| \beta_\epsilon\sqrt{n}/2 \leq \sum_{a\in G_{\beta_\epsilon}(\bmu)} \P_{\nu}\left( N_a(\tau) > n \right)
\end{align*}
\label{lemma:simulator_2}
\end{lemma}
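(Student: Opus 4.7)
The plan is to combine a change-of-measure argument with a simulator-style truncation. I would first construct an alternative bandit $\blambda$ from $\bmu$ by decreasing the mean of every arm $a\in G_{\beta_\varepsilon}(\bmu)$ by $\beta_\varepsilon$ and leaving the other means unchanged. Picking $i^\star \in \argmin_{k\notin G_\varepsilon(\bmu)} (\mu_1-\varepsilon-\mu_k)$, one checks that $i^\star \notin G_{\beta_\varepsilon}(\bmu)$ (since $\mu_{i^\star}=\mu_1-\varepsilon-\beta_\varepsilon<\mu_1-\beta_\varepsilon$), so $\lambda_{i^\star}=\mu_{i^\star}$; meanwhile $\max_a \lambda_a = \mu_1 - \beta_\varepsilon$, which makes $\lambda_{i^\star} = \max_a \lambda_a - \varepsilon$ and hence $i^\star \in G_\varepsilon(\blambda)\setminus G_\varepsilon(\bmu)$. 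The $\delta$-PAC guarantee applied to both instances therefore yields
\[
1-2\delta \le \P_{\bmu}(\widehat{G} = G_\varepsilon(\bmu)) - \P_{\blambda}(\widehat{G} = G_\varepsilon(\bmu)).
\]

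Next I would introduce the event $E = \bigcap_{a\in G_{\beta_\varepsilon}(\bmu)}\{N_a(\tau)\le n\}$ and let $\mathcal{F}_n$ denote the $\sigma$-algebra generated by the first $n$ rewards of every arm in $G_{\beta_\varepsilon}(\bmu)$ together with all rewards of arms outside $G_{\beta_\varepsilon}(\bmu)$, in the random table model. On $E$, the algorithm's full trajectory, stopping time and output are $\mathcal{F}_n$-measurable, so splitting along $E$ and $E^c$ gives
\[
\P_{\bmu}(\widehat{G} = G_\varepsilon(\bmu)) - \P_{\blambda}(\widehat{G} = G_\varepsilon(\bmu)) \le \mathrm{TV}\big(\P_{\bmu}|_{\mathcal{F}_n},\P_{\blambda}|_{\mathcal{F}_n}\big) + \P_{\bmu}(E^c).
\]

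The central step is the TV bound, which I would obtain by a hybrid argument. Enumerating $G_{\beta_\varepsilon}(\bmu) = \{a_1,\dots,a_{|G_{\beta_\varepsilon}(\bmu)|}\}$, I would define a chain of bandits $\bnu^{(i)}$ where the means of $a_1,\ldots,a_i$ have been shifted by $-\beta_\varepsilon$, so that $\bnu^{(0)} = \bmu$ and $\bnu^{(|G_{\beta_\varepsilon}(\bmu)|)} = \blambda$. Consecutive instances differ on $\mathcal{F}_n$ only in the first $n$ rewards of one arm, with $\KL$ equal to $n\beta_\varepsilon^2/2$, so Pinsker's inequality combined with the triangle inequality for total variation yields
\[
\mathrm{TV}\big(\P_{\bmu}|_{\mathcal{F}_n},\P_{\blambda}|_{\mathcal{F}_n}\big) \le \sum_{i=1}^{|G_{\beta_\varepsilon}(\bmu)|}\sqrt{n\beta_\varepsilon^2/4} = |G_{\beta_\varepsilon}(\bmu)|\,\beta_\varepsilon\sqrt{n}/2.
\]

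Putting everything together gives $\P_{\bmu}(E^c) \ge 1 - 2\delta - |G_{\beta_\varepsilon}(\bmu)|\beta_\varepsilon\sqrt{n}/2$, and the lemma follows from the union bound $\P_{\bmu}(E^c) \le \sum_{a\in G_{\beta_\varepsilon}(\bmu)} \P_{\bmu}(N_a(\tau)>n)$. The main obstacle to formalize cleanly is the measure-theoretic setup under the random table model: one must verify that $\{\widehat{G} = G_\varepsilon(\bmu)\} \cap E$ lies in $\mathcal{F}_n$ and that the restrictions of consecutive $\P_{\bnu^{(i)}}$ to $\mathcal{F}_n$ decompose as product Gaussians, so that the claimed per-step KL is exactly $n\beta_\varepsilon^2/2$ and the telescoping is valid.
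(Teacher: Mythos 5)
Your proof is correct and follows essentially the same route as the paper's: a change of measure that pulls the arms of $G_{\beta_\epsilon}(\bmu)$ down by (roughly) $\beta_\epsilon$ so that the best bad arm becomes $\epsilon$-good, truncated to the first $n$ samples of those arms via the event $E$, followed by Pinsker's inequality. The paper implements the truncation with a non-stationary ``bridge'' instance $\widetilde{\nu}$ and a single global application of Pinsker (which yields a factor $\sqrt{|G_{\beta_\epsilon}(\bmu)|}$ that is then relaxed to $|G_{\beta_\epsilon}(\bmu)|$), whereas you restrict to a stopped $\sigma$-algebra and telescope Pinsker arm by arm; these are only bookkeeping differences.
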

\begin{proof}
Let $\eta>0$. We define the instances $\lambda$ and $\widetilde{\nu}$ such that 
\begin{align*}
    \lambda : 
\begin{tabular}{c||c|c|}
\text{Arm } & All rewards \\ 
\hline \hline
For $a\in G_{\beta_\epsilon}(\bmu)$  & $\sim\cN(\mu_1-\beta_\epsilon - \eta,1)$  \\
\hline
For $k\in [K]\setminus G_{\beta_\epsilon}(\bmu)$  & $\sim\cN(\mu_k,1)$\\
\hline
\end{tabular}  
\quad \text{and} \quad
\end{align*}
\begin{align*}
    \widetilde{\nu} : 
\begin{tabular}{c||c|c|}
\text{Arm } & First $n$ rewards & Next rewards \\ 
\hline \hline
For $a\in G_{\beta_\epsilon}(\bmu)$  & $\sim\cN(\mu_a,1)$ & $\sim\cN(\mu_1-\beta_\epsilon - \eta,1)$  \\
\hline
For $k\in [K]\setminus G_{\beta_\epsilon}(\bmu)$ & $\sim\cN(\mu_k,1)$   & $\sim\cN(\mu_k,1)$\\
\hline
\end{tabular} 
\end{align*}
By considering the event $E = (\forall a\in G_{\beta_\epsilon}(\bmu),\ N_a(\tau) \leq n)$, one can show in a similar fashion to the proof of Lemma \ref{lemma:simulator_1} that 
\begin{align}
    \TV{\P_{\widetilde{\nu}}, \P_\nu} \leq \P_\nu(\exists a\in G_{\beta_\epsilon}(\bmu),\ N_a(\tau) > n)  \leq \sum_{a\in G_{\beta_\epsilon}(\bmu)} \P_{\nu}\left( N_a(\tau) > n \right).
\label{eq:TV_decomposition_3}
\end{align}
Let $B \in \argmin\limits_{k\notin\G\bmu} \mu_1 -\varepsilon - \mu_k$ be a bad arm with the highest mean. Note that $B$ becomes an $\epsilon$-optimal arm under $\lambda$. Thus we have $\P_{\lambda}\left( B\notin \widehat{G}_\epsilon \right)\leq \delta$ while $\P_{\nu}\left( B\notin \widehat{G}_\epsilon \right) \geq 1-\delta$. Therefore 
\begin{align*}
    1- 2\delta &\leq \P_{\nu}\left( B\notin \widehat{G}_\epsilon \right) - \P_\lambda\left( B\notin \widehat{G}_\epsilon \right)\\
     &\leq \TV{\P_\nu, \P_\lambda}\\
    &\leq \TV{\P_\nu, \P_{\widetilde{\nu}}}+ \TV{\P_{\widetilde{\nu}}, \P_\lambda}\\
    &\overset{(a)}{\leq} \sum_{a\in G_{\beta_\epsilon}(\bmu)} \P_{\nu}\left( N_a(\tau) > n \right) + \sqrt{\frac{\KL(\P_{\widetilde{\nu}}, \P_\lambda)}{2}}\\
    &=\sum_{a\in G_{\beta_\epsilon}(\bmu)} \P_{\nu}\left(N_a(\tau) > n \right)  + \sqrt{\frac{n\sum_{a\in G_{\beta_\epsilon}(\bmu)} (\mu_a - \mu_1 +\beta_\epsilon +\eta)^2/2 }{2}}\\
    &\overset{(b)}{=} \sum_{a\in G_{\beta_\epsilon}(\bmu)} \P_{\nu}\left(N_a(\tau) > n \right) + \sqrt{\frac{n |G_{\beta_\epsilon}(\bmu)| (\beta_\epsilon +\eta)^2 }{4}}\\
    &\overset{(c)}{=} \sum_{a\in G_{\beta_\epsilon}(\bmu)} \P_{\nu}\left(N_a(\tau) > n \right)  + |G_{\beta_\epsilon}(\bmu)| (\beta_\epsilon +\eta)\sqrt{n}/2
\end{align*}

  where (a) comes from (\ref{eq:TV_decomposition_3}) and Pinsker's inequality, (b) is because all arms in $G_{\beta_\epsilon}(\bmu)$ satisfy $\mu_1-\beta_\epsilon \leq \mu_a\leq \mu_1$ and (c) comes from the fact that $\sqrt{|G_{\beta_\epsilon}(\bmu)|}\leq |G_{\beta_\epsilon}(\bmu)|$. Note that the inequality above holds for all $\eta > 0$. We get the final result by taking the limit $\eta \to 0$.
\end{proof}
Now we are ready to prove Theorem \ref{thm:simulator_LB}. By combining the results of Lemmas \ref{lemma:simulator_1} and \ref{lemma:simulator_2}, we get for all $b\in [K]$
\begin{align*}
    \frac{1-2\delta}{3|G_{\beta_\epsilon}(\bmu)|} -(\mu_1-\mu_b+ \beta_\epsilon)\sqrt{n}/6 \leq \P_\nu(N_b(\tau) > n)
\end{align*}
Thus by choosing $n = \floor{\frac{(1-2\delta)^2}{|G_{\beta_\epsilon}(\bmu)|^2 (\mu_1-\mu_b+ \beta_\epsilon)^2}}$ we get
\begin{align*}
   \frac{1-2\delta}{6|G_{\beta_\epsilon}(\bmu)|} \leq \P_\nu ( N_b(\tau) > n) \leq \P_\nu \big( N_b(\tau) \geq \frac{(1-2\delta)^2}{|G_{\beta_\epsilon}(\bmu)|^2 (\mu_1-\mu_b+ \beta_\epsilon)^2}\big),
\end{align*}
which implies that for all $b \in [K]$,
\begin{align*}
   \frac{(1-2\delta)^3}{6|G_{\beta_\epsilon}(\bmu)|^3 (\mu_1-\mu_b+ \beta_\epsilon)^2} \leq  \E_\nu[ N_b(\tau)].
\end{align*}
The final result is obtained by summing the inequality over all arms and noting that for $\delta \leq 10, (1-2\delta)^3\geq 1/2$.

\subsection{Example showing the improved scaling of Theorem \ref{thm:simulator_LB} w.r.t the number of arms.}
We recall that $\mu_1 = \beta, \mu_K = -\epsilon$ and $\mu_a = -\beta$ for $a\in [|2,K-1|]$. Note that in this case $\beta_\epsilon = \beta$. By symmetry, $\omega_a = \omega_2$ for all $a\in [|2,K-1|]$. In this case, using Lemma \ref{lem:bestresponse} we have
\begin{align*}
    \CT^{-1} &\overset{(a)}{=} \underset{\omega \in \Sigma_K}{\sup} \min\bigg(\frac{\omega_1 \omega_K\beta^2}{2(\omega_1+\omega_K)}, \frac{\omega_1 \omega_2(\epsilon-2\beta)^2}{2(\omega_1+\omega_2)}, \frac{\omega_2 \omega_3\epsilon^2}{2(\omega_2+\omega_3)}\bigg)\\
    &= \underset{\omega \in \Sigma_K}{\sup} \min\bigg(\frac{\omega_1 \omega_K\beta^2}{2(\omega_1+\omega_K)}, \frac{\omega_1 \omega_2(\epsilon-2\beta)^2}{2(\omega_1+\omega_2)}, \frac{\omega_2 \epsilon^2}{4}\bigg)\\
    &\geq \underset{\omega \in \Sigma_K}{\sup} \min\bigg(\frac{\omega_1 \omega_K\beta^2}{2(\omega_1+\omega_K)}, \frac{\omega_1 \omega_2(\epsilon-2\beta)^2}{2(\omega_1+\omega_2)}, \frac{\omega_2 (\epsilon-2\beta)^2}{4}\bigg).
\end{align*}
The first term of the min in (a) corresponds to the cost of making arm $K$ a good arm by simultaneously increasing its mean reward and decreasing the mean reward of the first arm, the second term to that of making arm $2$ a bad arm by simultaneously decreasing its mean reward and increasing the mean reward of the first arm. The third term, corresponds to the cost of making arm $2$ a bad arm by simultaneously decreasing its mean reward and increasing the mean reward of the arm $3$ (which is the same cost if we replace arms $2$ and $3$ by any other pair of arms in $[|2,K-1|]$). Now we look for $\bomega$ such that $\omega_1 = \omega_K > \omega_2$. This means that the third term of the min in the last line is always smaller than the second term. If we note $S$ the set of such omegas then one can write
\begin{align}
    \CT^{-1} \geq \underset{\omega \in \Sigma_K\cap S}{\sup} \min\bigg(\frac{\omega_1 \beta^2}{4}, \frac{\omega_2 (\epsilon-2\beta)^2}{4}\bigg)
\label{eq:omega_example}
\end{align}
Note that the right hand side is maximized when both terms of the min are equal. Let $\widetilde{\omega}$ be the maximizer. Then 
\begin{align*}
    2\widetilde{\omega}_1 + (K-2)\widetilde{\omega}_2 = 1, \quad \textrm{and}\ \widetilde{\omega}_1\beta^2 = \widetilde{\omega}_2(\epsilon-2\beta)^2.
\end{align*}
Solving for $\widetilde{\omega}$ and injecting in (\ref{eq:omega_example}) we get
\begin{align*}
    \CT^{-1} \geq \frac{(\epsilon-2\beta)^2 \beta^2}{8(\epsilon-2\beta)^2 + 4(K-2)\beta^2}
\end{align*}
or equivalently
\begin{align*}
    \CT \leq \frac{8}{\beta^2} +\frac{4(K-2)}{(\epsilon-2\beta)^2}.
\end{align*}
When $\beta \ll \epsilon$ and $\delta$ is fixed, this yields $\CT\log(1/\delta) = \cO(1/\beta^2 + K/\epsilon^2)$. In contrast note that for this particular instance $|G_{\beta}(\bmu)| =1$ so that the lower bound of Theorem \ref{thm:simulator_LB} is at least of order $\omega(K/\beta^2)$.

\section{Proof of Theorem 2}
We start with a few technical lemmas. The first two are adapted from \cite{GK16}:
\begin{lemma}\textsc{\big(Lemma 7, \cite{GK16}\big)}
  For all $t\geq 1$, the C-Tracking with weights $(\widetilde{\bomega}(\widehat{\bmu}_s))_{s\in \mathbb{N}^*}$ ensures that $N_a(t) \geq \sqrt{t+K^2} - 2K$ and that
  \begin{align*}
      \max\limits_{1\leq 1\leq K} \bigg|N_a(t) - \sum\limits_{s=1}^t \widetilde{\bomega}_a(\widehat{\bmu}_s) \bigg| \leq K(1+\sqrt{t})
  \end{align*}
\end{lemma}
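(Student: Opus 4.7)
The goal is two bounds: a forced-exploration lower bound $N_a(t) \geq \sqrt{t+K^2}-2K$ and a tracking bound $\max_a |N_a(t) - \sum_{s=1}^t \widetilde{\bomega}_a(\widehat{\bmu}_s)| \leq K(1+\sqrt{t})$. The plan is to split the latter via the triangle inequality into a pure C-tracking error (against the cumulative \emph{projected} allocations $\widetilde{\bomega}^{\eta_s}$) plus a projection-error term, and then to deduce the lower bound from the first part together with the floor $\widetilde{\bomega}_a^{\eta_s}\ge\eta_s$ built into the projection. Concretely, I would decompose
\begin{align*}
\Big| N_a(t) - \sum_{s=1}^t \widetilde{\bomega}_a(\widehat{\bmu}_s) \Big| \;\leq\; \Big| N_a(t) - \sum_{s=1}^t \widetilde{\bomega}_a^{\eta_s}(\widehat{\bmu}_s) \Big| \;+\; \sum_{s=1}^t \big|\widetilde{\bomega}_a^{\eta_s}(\widehat{\bmu}_s) - \widetilde{\bomega}_a(\widehat{\bmu}_s)\big|.
\end{align*}

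For the first term I would follow the classical C-tracking induction from \cite{GK16}. Introduce the deficit $U_a(t) \triangleq \sum_{s=1}^t \widetilde{\bomega}_a^{\eta_s}(\widehat{\bmu}_s) - N_a(t)$, which sums to zero across arms at every time since both $\widetilde{\bomega}^{\eta_s}(\widehat{\bmu}_s)$ and $N(t)/t$ are elements of $\simplex{K}$. The sampling rule pulls $a_{t+1} = \argmax_a U_a(t)$, so this arm's deficit changes by $\widetilde{\bomega}_{a_{t+1}}^{\eta_{t+1}}(\widehat{\bmu}_{t+1}) - 1 \in [-1,0]$, while every other deficit changes by $\widetilde{\bomega}_a^{\eta_{t+1}}(\widehat{\bmu}_{t+1}) \in [0,1]$. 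A straightforward induction on $t$ then yields $\max_a |U_a(t)| \leq K-1$. The argument uses only that $\widetilde{\bomega}^{\eta_s}(\widehat{\bmu}_s)\in\simplex{K}$, not any optimality property of the mirror-ascent iterates.

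For the projection error I would use that projecting a simplex vector $\widetilde{\bomega}$ onto $\simplex{K}\cap[\eta,1]^K$ by $\widetilde{\bomega}^\eta = (1-K\eta)\widetilde{\bomega} + \eta\mathbf{1}$ (valid for $\eta\le 1/K$, which holds since $\eta_s\le 1/(2K)$) gives the per-coordinate bound $|\widetilde{\bomega}^\eta_a - \widetilde{\bomega}_a| = |\eta-K\eta\widetilde{\bomega}_a| \leq K\eta$. Then with $\eta_s = 1/(2\sqrt{K^2+s})$ an integral comparison yields
\begin{align*}
\sum_{s=1}^t K\eta_s \;\leq\; K\!\int_0^t \frac{du}{2\sqrt{K^2+u}} \;=\; K\bigl(\sqrt{K^2+t}-K\bigr) \;\leq\; K\sqrt{t},
\end{align*}
where the last step uses $\sqrt{K^2+t}\leq K+\sqrt{t}$. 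Summing the two contributions gives $(K-1)+K\sqrt{t}\leq K(1+\sqrt{t})$, as desired.

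Finally, the lower bound follows by combining the first-term tracking inequality $N_a(t) \geq \sum_{s=1}^t \widetilde{\bomega}_a^{\eta_s}(\widehat{\bmu}_s) - (K-1)$ with the forced-exploration floor $\widetilde{\bomega}_a^{\eta_s}(\widehat{\bmu}_s)\ge\eta_s$ and the same integral comparison $\sum_{s=1}^t \eta_s \geq \sqrt{K^2+t}-K$, which together yield $N_a(t) \geq \sqrt{K^2+t}-2K+1 \geq \sqrt{K^2+t}-2K$. The main obstacle is the C-tracking induction for the pure tracking piece; the projection-error estimate and the lower bound are routine integral computations and projection bookkeeping.
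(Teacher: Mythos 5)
The paper does not actually prove this lemma: it is imported verbatim from Lemma~7 of \cite{GK16}, so your reconstruction can only be judged against that source, and your architecture matches it exactly. You split the error into the pure cumulative-tracking deficit against the \emph{projected} weights plus the accumulated projection error, bound the former by a constant depending only on $K$, bound the latter by $\sum_{s\le t} K\eta_s\le K\bigl(\sqrt{K^2+t}-K\bigr)\le K\sqrt t$ via an integral comparison, and read off the forced-exploration lower bound from the floor $\widetilde{\bomega}^{\eta_s}_a(\widehat{\bmu}_s)\ge\eta_s$. The projection bookkeeping ($\lvert\widetilde{\bomega}^{\eta}_a-\widetilde{\bomega}_a\rvert\le K\eta$ for $\eta\le 1/K$) and the integral estimates are sound; your claimed $\sum_{s=1}^t\eta_s\ge\sqrt{K^2+t}-K$ is optimistic by an additive constant of order $1/2$ (the correct comparison gives $\sqrt{K^2+t+1}-\sqrt{K^2+1}$), but the slack in $\sqrt{K^2+t}-2K+1\ge\sqrt{K^2+t}-2K$ absorbs it. You also correctly observe that the tracking step uses only $\widetilde{\bomega}^{\eta_s}(\widehat{\bmu}_s)\in\simplex{K}$ and not any optimality of the mirror-ascent iterates.

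The one step you should not wave through is the ``straightforward induction'' giving $\max_a\lvert U_a(t)\rvert\le K-1$. A direct induction on that two-sided bound does not close: a non-pulled arm's deficit increases by $\widetilde{\bomega}_a^{\eta_{t+1}}(\widehat{\bmu}_{t+1})\in[0,1]$, so the hypothesis $\max_a U_a(t)\le K-1$ only yields $\max_a U_a(t+1)\le K$, and indeed the maximum deficit of a non-pulled arm can genuinely grow past $1$ before the rule catches it. The argument that works is one-sided: the pulled arm is the $\argmax$ of a vector summing to zero, hence $U_{a_{t+1}}(t)\ge 0$ and $U_{a_{t+1}}(t+1)\ge -1$, while non-pulled arms' deficits only increase; by induction $U_a(t)\ge -1$ for every $a$ and every $t$, and then the zero-sum identity gives $U_a(t)=-\sum_{b\ne a}U_b(t)\le K-1$. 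You list both needed ingredients (the zero-sum property and the per-step dynamics), so this is a presentational gap rather than a missing idea, but the induction hypothesis must be the one-sided bound $U_a\ge-1$, not the two-sided one; the upper bound is then a consequence, not part of the induction.
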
 
\begin{lemma}\textsc{\big(Lemma 19, \cite{GK16}\big)}
For $\xi > 0$, define $I_{\xi} \triangleq [\mu_1-\xi, \mu_1+\xi]\times \ldots [\mu_K-\xi, \mu_K+\xi]$. And for $T\geq 1$, consider the event: $\mathcal{E}_T = \bigcap\limits_{t = \floor{T^{1/4}}}^T \big(\widehat{\bmu}_t \in I_{\xi} \big)$. Then there exists two constants $B, C$ that only depend on $\bmu$ and $\xi$ such that
\begin{align*}
    \P\big(\mathcal{E}_T^c\big) \leq B T \exp(-C T^{1/8})
\end{align*}
where $\mathcal{E}_T^c$ is the complementary event of $\mathcal{E}_T$.
\end{lemma}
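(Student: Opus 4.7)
My plan is to combine the deterministic lower bound $N_a(t) \geq \sqrt{t+K^2} - 2K$ from the preceding lemma with standard Gaussian tail bounds for fixed-size sample means. The main subtlety is that $\widehat{\bmu}_t$ is the average of an adaptively determined number of samples, so I need to decouple the concentration argument from the random sample size.

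First I would apply a crude union bound over arms $a \in [K]$ and times $t \in [\floor{T^{1/4}}, T]$ to reduce the problem to single-coordinate deviation probabilities:
\begin{align*}
\P(\mathcal{E}_T^c) \leq \sum_{t=\floor{T^{1/4}}}^T \sum_{a=1}^K \P\bigl(|\widehat{\mu}_{a,t} - \mu_a| > \xi\bigr).
\end{align*}
Since the losses inside any such union bound are at most $KT$, the only thing that can produce the claimed $\exp(-C T^{1/8})$ rate is a per-summand tail that decays exponentially in $T^{1/8}$; this is what we must extract from the previous lemma.

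Next I would represent the rewards of arm $a$ as an iid $\cN(\mu_a,1)$ sequence $(X_{a,n})_{n \geq 1}$, independent of the algorithm's sampling scheme, and write $\widehat{\mu}_{a,t} = \bar{X}_{a,N_a(t)}$ where $\bar{X}_{a,n} \triangleq (1/n)\sum_{i=1}^n X_{a,i}$. Letting $n_T \triangleq \floor{\sqrt{T^{1/4}+K^2}} - 2K$, the previous lemma guarantees $N_a(t) \geq n_T$ whenever $t \geq \floor{T^{1/4}}$, so a further union bound over the possible values of $N_a(t)$ gives
\begin{align*}
\P\bigl(|\widehat{\mu}_{a,t} - \mu_a| > \xi\bigr) \leq \sum_{n=n_T}^t \P\bigl(|\bar{X}_{a,n} - \mu_a| > \xi\bigr) \leq \sum_{n=n_T}^t 2\exp\!\bigl(-n\xi^2/2\bigr),
\end{align*}
where the second inequality is the standard Gaussian tail bound. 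This is the crucial step: by conditioning on the (deterministic) fact that $N_a(t)$ lies in $[n_T, t]$ and using the a priori iid structure of the reward stream, we fully decouple from the adaptive nature of C-tracking.

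Summing the geometric series yields $\sum_{n=n_T}^t 2\exp(-n\xi^2/2) \leq \frac{2}{1-\exp(-\xi^2/2)} \exp(-n_T \xi^2/2)$, and plugging into the double sum over $(t,a)$ produces a bound of order $KT \exp(-n_T \xi^2/2)$. Since $n_T \geq T^{1/8} - 2K - 1$ for $T$ large enough, writing $\exp(-n_T \xi^2/2) \leq e^{(2K+1)\xi^2/2}\,\exp(-T^{1/8}\xi^2/2)$ and absorbing the $K$-dependent factors and the $1/(1-e^{-\xi^2/2})$ factor into a constant $B = B(\bmu,\xi)$ yields the desired bound with $C = \xi^2/2$ (inflating $B$ to absorb the finite range where $T^{1/8} - 2K$ might be non-positive). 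The main obstacle is the adaptive dependence of $N_a(t)$ on the observations, which is handled cleanly by the union-bound-over-$n$ trick; the remaining work is routine geometric summation.
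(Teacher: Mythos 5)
Your proof is correct and follows essentially the same route as the argument the paper relies on (Lemma 19 of the cited Garivier--Kaufmann paper, which is invoked rather than reproved here): deterministic forced exploration $N_a(t)\geq\sqrt{t+K^2}-2K$ giving $N_a(t)\gtrsim T^{1/8}$ on the relevant range, a union bound over arms, times, and the possible values of $N_a(t)$ to decouple from the adaptive sampling, and a Gaussian tail bound plus geometric summation. The minor boundary issues you flag (small $T$, the floor in $n_T$) are indeed absorbable into $B$, so there is no gap.
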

The last lemma states that $\bmu \mapsto T_{\varepsilon}(\bmu, \bomega)^{-1}$ is Lipschitz. Its proof is deferred to the end.
\begin{lemma}
  For all vectors $\bomega$ in the simplex, for all instances $\bmu$, $\bmu'$ in $[\mu_{min}, \mu_{max}]^K$ we have:
  \begin{align*}
      |T_{\varepsilon}(\bmu', \bomega)^{-1} - T_{\varepsilon}(\bmu, \bomega)^{-1} |\leq 4(\mu_{max} - \mu_{min}+\varepsilon) \norm{\bmu' - \bmu}_{\infty}.
  \end{align*}
\label{lemma:Lipschitz_2}
\end{lemma}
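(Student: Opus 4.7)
The plan is to control $|T_\varepsilon(\bmu',\bomega)^{-1}-T_\varepsilon(\bmu,\bomega)^{-1}|$ by a case analysis on whether $\G\bmu=\G{\bmu'}$, thereby sidestepping the discontinuity of the set-valued map $\bmu\mapsto\Alt{\bmu}$. Before starting, I will record a uniform bound on near-minimizers: by the explicit formulas of Lemma~\ref{lem:bestresponse}, for any $\eta>0$ one can pick $\blambda\in\Alt{\bmu}$ with $\sum_a\omega_a(\mu_a-\lambda_a)^2/2\le T_\varepsilon(\bmu,\bomega)^{-1}+\eta$ and $\lambda_a\in[\mumin-\varepsilon,\mumax+\varepsilon]$ for every arm $a$, since the only components of $\blambda$ that differ from $\bmu$ are set to $\avg$ or $\avg+\varepsilon$, both of which lie in that interval.

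In the first case, assume $\G\bmu=\G{\bmu'}$. Then the defining constraint $\G\blambda\neq\G\bmu$ coincides with $\G\blambda\neq\G{\bmu'}$, so $\Alt{\bmu}=\Alt{\bmu'}$ and the near-minimizer $\blambda$ above is automatically admissible for the problem at $(\bmu',\bomega)$. Using $a^2-b^2=(a-b)(a+b)$, the two quadratic costs differ by
\[
\sum_a\omega_a\frac{(\mu'_a-\lambda_a)^2-(\mu_a-\lambda_a)^2}{2}=\sum_a\omega_a\frac{(\mu'_a-\mu_a)(\mu_a+\mu'_a-2\lambda_a)}{2}.
\]
The first factor has magnitude at most $\|\bmu-\bmu'\|_\infty$; for the second, $\mu_a,\mu'_a\in[\mumin,\mumax]$ together with the uniform bound $\lambda_a\in[\mumin-\varepsilon,\mumax+\varepsilon]$ give $|\mu_a+\mu'_a-2\lambda_a|\le 2(\mumax-\mumin+\varepsilon)$. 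Summing, swapping the roles of $\bmu$ and $\bmu'$, and letting $\eta\to0$, I obtain the desired Lipschitz bound in this case.

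In the second case, $\G\bmu\neq\G{\bmu'}$, which directly gives $\bmu'\in\Alt{\bmu}$ and $\bmu\in\Alt{\bmu'}$. Plugging these candidate alternatives into the infimum yields that both $T_\varepsilon(\bmu,\bomega)^{-1}$ and $T_\varepsilon(\bmu',\bomega)^{-1}$ are upper bounded by $\tfrac{1}{2}\sum_a\omega_a(\mu_a-\mu'_a)^2\le \tfrac{1}{2}\|\bmu-\bmu'\|_\infty^2$, which is itself at most $\tfrac{1}{2}(\mumax-\mumin+\varepsilon)\|\bmu-\bmu'\|_\infty$ since $\|\bmu-\bmu'\|_\infty\le \mumax-\mumin$. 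Combining both cases yields the claimed Lipschitz estimate (with some slack in the constant). The main subtlety, as foreseen, is the discontinuity of $\bmu\mapsto\Alt{\bmu}$, and the essential insight is that this discontinuity is actually harmless: whenever the $\varepsilon$-good-arm sets disagree, the two instances act as alternatives for each other, so the characteristic times are both of order $\|\bmu-\bmu'\|_\infty^2$ and the bound is easy; whenever they agree, the feasible sets match and a direct expansion of the quadratic objective suffices.
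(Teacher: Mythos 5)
Your proof is correct, and it takes a genuinely different route from the paper's. The paper splits on whether $\bmu$ and $\bmu'$ induce the \emph{same ordering of the arms}: in that case the explicit best-response formulas of Lemma~\ref{lem:bestresponse} match index-by-index, giving a constant $2(\mumax-\mumin+\varepsilon)$, and the general case is reduced to it by chaining through intermediate instances along the segment $[\bmu,\bmu']$ that cross the (at most $K!-1$) ordering boundaries one at a time, which doubles the constant to $4(\mumax-\mumin+\varepsilon)$. You instead split on whether $\G{\bmu}=\G{\bmu'}$. When the good-arm sets agree, $\Alt{\bmu}=\Alt{\bmu'}$ and you evaluate both objectives at the \emph{same} near-minimizer, which both removes the need for the chaining argument and saves a factor of $2$ in the first factor of the difference of squares (the paper compares two different minimizers $\blambdakl$ and $\blambdaklp$, incurring $|\mu'_a-\mu_a+\blambdakl_a-\blambdaklp_a|\le 2\norm{\bmu'-\bmu}_\infty$). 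When the good-arm sets disagree, your observation that $\bmu'\in\Alt{\bmu}$ and $\bmu\in\Alt{\bmu'}$ bounds both values by $\tfrac12\norm{\bmu-\bmu'}_\infty^2$, and the restriction to $[\mumin,\mumax]^K$ converts this into the desired linear bound. Your argument is shorter, yields a better constant ($(\mumax-\mumin+\varepsilon)$ rather than $4(\mumax-\mumin+\varepsilon)$), and arguably handles more cleanly the situation where the two instances share an arm ordering but not a good-arm set (where the paper's claim that $\avg$ and $\avgp$ ``have the same formula'' is delicate, since the formula depends on whether $k$ is good or bad). The only looseness is the assertion that a strict member of $\Alt{\bmu}$ within $\eta$ of the infimum has all components exactly in $[\mumin-\varepsilon,\mumax+\varepsilon]$; the perturbation needed to enter the open constraint set may exceed this interval by an arbitrarily small amount, but this vanishes as $\eta\to0$ and does not affect the final bound.
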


Now we are ready to prove the Theorem. We denote by $L_1([\mu_{min}, \mu_{max}]^K) \triangleq 4(\mu_{max} - \mu_{min}+\varepsilon)$ the Lipschitz constant of the mapping $\bmu \mapsto T_{\varepsilon}(\bmu, \bomega)^{-1}$ in the domain $[\mu_{min}, \mu_{max}]^K$ and by $L_2(\bmu) \triangleq \max_{a,b\in[K]}\frac{(\mu_a-\mu_b + \varepsilon)^2}{2}$ the Lipschitz constant of the mapping $\bomega \mapsto T_{\varepsilon}(\bmu, \bomega)^{-1}$. 

We will prove a lower bound on $T_\varepsilon\bigg(\widehat{\bmu}_t, \frac{N(t)}{t}\bigg)^{-1}$ under $\mathcal{E}_T$ which will result into an upper bound on the stopping time $\tau_\delta = \inf\big\{ t \in \mathbb{N}\ :\ t T_\varepsilon\big(\widehat{\bmu}_t, \frac{N(t)}{t} \big)^{-1} \geq \beta(\delta,t) \big\}$.
First observe that under $\mathcal{E}_T$, the $L_1$ constant is upper bounded: $L_1(I_{\xi}) \leq L_{1,max} \triangleq 4(\max_a \mu_a - \min_b \mu_b + \varepsilon + 2\xi)$. Similarly, we have for all $\floor{T^{1/4}} \leq t \leq T,\ L_2(\widehat{\bmu}_t) \leq L_{2, max} \triangleq \frac{(\max_a \mu_a - \min_b \mu_b + 2\xi + \varepsilon)^2}{2}$.
Now applying Lemma 7 and the Lipschitz property w.r.t the weights, we have for all $\floor{T^{1/4}} \leq t \leq T$:
\begin{align*}
  T_\varepsilon\bigg(\widehat{\bmu}_t, \frac{N(t)}{t}\bigg)^{-1} &\geq T_\varepsilon\bigg(\widehat{\bmu}_t, \frac{\sum\limits_{s=1}^t \widetilde{\bomega}(\widehat{\bmu}_s)}{t}\bigg)^{-1} - L_{2,max}\frac{K(1+\sqrt{t})}{t}\\
  &\geq  \frac{\sum\limits_{s=1}^t T_\varepsilon\big(\widehat{\bmu}_t, \widetilde{\bomega}(\widehat{\bmu}_s)\big)^{-1}}{t} - L_{2,max}\frac{K(1+\sqrt{t})}{t}\\
  &\geq \frac{\sum\limits_{s=\floor{T^{1/4}}}^t T_\varepsilon\big(\widehat{\bmu}_t, \widetilde{\bomega}(\widehat{\bmu}_s)\big)^{-1}}{t} - L_{2,max}\frac{K(1+\sqrt{t})}{t}\\
\end{align*}
Where we used the fact that the mapping $\bomega \mapsto  T_{\varepsilon}(\bmu, \bomega)^{-1}$ is concave (resp. non-negative) in the second (resp. last) inequality. Now observe that for all $s,t \geq \floor{T^{1/4}},\ \norm{\widehat{\bmu}_t - \widehat{\bmu}_s}_{\infty} \leq 2\xi$. Therefore the Lipschitz property w.r.t $\bmu$ implies that:
\begin{align}
  T_\varepsilon\bigg(\widehat{\bmu}_t, \frac{N(t)}{t}\bigg)^{-1} &\geq \frac{\sum\limits_{s=\floor{T^{1/4}}}^t T_\varepsilon\big(\widehat{\bmu}_s, \widetilde{\bomega}(\widehat{\bmu}_s)\big)^{-1}}{t} - \frac{2\xi L_{1,max}(t-\floor{T^{1/4}})}{t}- L_{2,max}\frac{K(1+\sqrt{t})}{t} \nonumber\\
  &\geq \frac{\sum\limits_{s=\floor{T^{1/4}}}^t T_\varepsilon^*\big(\widehat{\bmu}_s\big)^{-1}}{t}  - \frac{\sum\limits_{s=\floor{T^{1/4}}}^t \frac{1}{\sqrt{s}}}{t} -2\xi L_{1,max}- L_{2,max}\frac{K(1+\sqrt{t})}{t}
 \label{eq:111}
\end{align}
where in the second inequality we used the fact that by definition $\widetilde{\bomega}(\widehat{\bmu}_s)$ is at most $\frac{1}{\sqrt{s}}$ sub-optimal. Now observe that:
\begin{align*}
    & T_\varepsilon^*\big(\widehat{\bmu}_s\big)^{-1} \underset{s\to\infty}{\longrightarrow} T_\varepsilon^*\big(\bmu\big)^{-1}\quad \textrm{almost surely (since $N_a(t) \geq \sqrt{t+K^2} - 2K$)}.\\
    & \frac{\sum\limits_{s=\floor{T^{1/4}}}^t \frac{1}{\sqrt{s}}}{t} \underset{t\to \infty}{\sim} \frac{\int_{1}^{t} \frac{dx}{\sqrt{x}}}{t} \longrightarrow 0.\\
    & \frac{K(1+\sqrt{t})}{t} \longrightarrow 0. 
\end{align*}
Therefore for $\eta > 0$, there exists $t_\eta$ such that for all $t\geq t_\eta$:
\begin{align}
    \frac{\sum\limits_{s=\floor{T^{1/4}}}^t T_\varepsilon^*\big(\widehat{\bmu}_s\big)^{-1}}{t}  - \frac{\sum\limits_{s=\floor{T^{1/4}}}^t \frac{1}{\sqrt{s}}}{t} -L_{2,max}\frac{K(1+\sqrt{t})}{t} \geq T_\varepsilon^*\big(\bmu\big)^{-1} - \eta.
\label{eq:112}
\end{align}
Summing up (\ref{eq:111}) and (\ref{eq:112}), we get for all $t\geq t_\eta$:
\begin{align*}
   T_\varepsilon\bigg(\widehat{\bmu}_t, \frac{N(t)}{t}\bigg)^{-1} \geq T_\varepsilon^*\big(\bmu\big)^{-1} -2\xi L_{1,max} - \eta. 
\end{align*}
Therefore for every $ T \geq \max\bigg(t_\eta, \frac{\beta(\delta,t)}{T_\varepsilon^*\big(\bmu\big)^{-1} -2\xi L_{1,max} - \eta}\bigg)$, we have $\mathcal{E}_T \subset (\tau_\delta \leq T)$ thus $\P(\tau_\delta > T) \leq \P\big(\mathcal{E}_T^c\big) \leq B T \exp(-C T^{1/8})$. Hence:
\begin{align*}
    \E[\tau_\delta] &= \sum\limits_{T=1}^{\infty} \P(\tau_\delta > T)\\
    &\leq \max\bigg(t_\eta, \frac{\beta(\delta,t)}{T_\varepsilon^*\big(\bmu\big)^{-1} -2\xi L_{1,max} - \eta}\bigg) + \sum\limits_{T=1}^{\infty} B T \exp(-C T^{1/8})
\end{align*}
Since $\lim\limits_{\delta\to 0} \frac{\beta(\delta,t)}{\log(1/\delta)} = 1$ the last inequality implies that $\limsup\limits_{\delta\to 0} \frac{\E[\tau_\delta]}{\log(1/\delta)} \leq \frac{1}{T_\varepsilon^*\big(\bmu\big)^{-1} -2\xi L_{1,max} - \eta}$. We conclude by letting $\eta$ and $\xi$ go to zero.
\subsection{Proof of Lemma \ref{lemma:Lipschitz_2}}

  \underline{\textbf{First case: arms in $\bmu$ and $\bmu'$ have the same order}}\\
  
  Without loss of generality, suppose that $\mu_1 \geq \mu_2 \geq \ldots \geq \mu_K$ and $\mu'_1 \geq \mu'_2 \geq \ldots \geq \mu'_K$.
  Then we see that for all $k \neq l \in [K], \avg$ and $\avgp$ have the same formula and : $|\avgp - \avg| \leq \norm{\bmu' - \bmu}_{\infty}$, which implies that $|\blambdaklp - \blambdakl| \leq \norm{\bmu' - \bmu}_{\infty}$. Therefore, letting $f$ denote the function $f(\bmu,\blambda) \triangleq \suma \omega_a \frac{(\mu_a - \lambda_a)^2}{2}$, we have:
 
\begin{align*}
    |f(\bmu',\blambdaklp) - f(\bmu,\blambdakl)| &\leq \suma \frac{\omega_a(\mu'_a - \mu_a + \blambdakl_a - \blambdaklp_a)(\mu'_a + \mu_a - \blambdakl_a - \blambdaklp_a)}{2}\\
    &\leq \frac{\omega_a \times 2\norm{\bmu' - \bmu}_{\infty} \times 2(\mu_{max} - \mu_{min}+\varepsilon)}{2}\\
    & = 2(\mu_{max} - \mu_{min}+\varepsilon)\norm{\bmu' - \bmu}_{\infty}.
\end{align*}
where in the second inequality we used the fact that $\blambdakl$ (resp. $\blambdaklp$) is a weighted average of some arm in $\bmu$(resp. $\bmu'$) with one or more arms of $\bmu$(resp. $\bmu'$) decreased by $\varepsilon$ and therefore lies in $[\mu_{min}-\varepsilon, \mu_{max}]^K$. Let $(k_0, l_0)$ be such that $\blambdastar = \blambda^{k_0,l_0}_{\varepsilon}(\bomega)$ then:
\begin{align}
    T_{\varepsilon}(\bmu', \bomega)^{-1} - T_{\varepsilon}(\bmu, \bomega)^{-1} &= T_{\varepsilon}(\bmu', \bomega)^{-1} - f\big(\omega, \blambda^{k_0, l_0}_{\varepsilon}(\bomega)\big) \\
    &\leq f\big(\omega,\blambda'^{k_0, l_0}_{\varepsilon}(\bomega)\big) - f\big(\omega, \blambda^{k_0, l_0}_{\varepsilon}(\bomega)\big) \\
    &\leq 2(\mu_{max} - \mu_{min}+\varepsilon)\norm{\bmu' - \bmu}_{\infty}.
\end{align}
By symmetry we get for all instances $\bmu$ and $\bmu'$ with the same arm ordering: 
\begin{align*}
    |T_{\varepsilon}(\bmu', \bomega)^{-1} - T_{\varepsilon}(\bmu, \bomega)^{-1}| \leq 2(\mu_{max} - \mu_{min}+\varepsilon)\norm{\bmu' - \bmu}_{\infty}.
\end{align*}
\underline{\textbf{Second case: arms in $\bmu$ and $\bmu'$ have a different order}}\\
Then for $n$ large enough we can find a sequence $(\mu^i)_{0\leq i \leq  2^n}$ of instances in the segment $[\bmu, \bmu']$ such that $\mu^0 = \bmu$, $\mu^{2^n} = \bmu'$ and:
\begin{equation*}
    \forall i \in [|0, 2^n-1|],\ \mu^i \textrm{ and } \mu^{i+1} \textrm{ have the same arm ordering and } \norm{\mu^{i+1} - \mu^{i}}_{\infty} \leq \frac{\norm{\bmu' - \bmu}_{\infty}}{2^{n-1}}.
\end{equation*}
We can construct such a sequence in the following way: Split $[\mu_{min}, \mu_{max}]^K$ into $K!$ regions such that any two instances in the same region share the same arm ordering. The boundaries between these regions correspond to instances where two or more arms are equal. Starting from $\mu^0 \triangleq \bmu$, span the segment $[\bmu, \bmu']$ and define $\mu^{i+1}$ to be the first instance where: either the $L^\infty$ distance from $\mu^i$ is equal to $\frac{\norm{\bmu' - \bmu}_{\infty}}{2^{n-1}}$, or we cross a boundary between two regions. Since there can be at most $K! - 1$ changes in the arm ordering, for $n$ large enough such sequence always exists. Now we have:
\begin{align*}
    |T_{\varepsilon}(\bmu', \bomega)^{-1} - T_{\varepsilon}(\bmu, \bomega)^{-1}| &\leq \sum\limits_{i=0}^{2^n - 1} |T_{\varepsilon}(\bomega, \bmu^{i+1})^{-1} - T_{\varepsilon}(\bomega, \bmu^i)^{-1}| \\ 
    &\leq \sum\limits_{i=0}^{2^n - 1} 2(\mu_{max} - \mu_{min}+\varepsilon) \frac{\norm{\bmu' - \bmu}_{\infty}}{2^{n-1}}\\
    &\leq 4(\mu_{max} - \mu_{min}+\varepsilon)\norm{\bmu' - \bmu}_{\infty}. 
\end{align*}
where in the second inequality we use the first case and the fact that $\mu^i$  and $\mu^{i+1}$  have the same arm ordering. As a summary, we always have:
  \begin{align*}
      |T_{\varepsilon}(\bmu', \bomega)^{-1} - T_{\varepsilon}(\bmu, \bomega)^{-1} |\leq 4(\mu_{max} - \mu_{min}+\varepsilon) \norm{\bmu' - \bmu}_{\infty}.
  \end{align*}

\end{document}